\documentclass[twoside]{article}

%
\usepackage[accepted]{aistats2022}
%


\setlength{\pdfpageheight}{11in}
\setlength{\pdfpagewidth}{8.5in}

\usepackage[hidelinks,colorlinks=true,linkcolor=blue,citecolor=blue]{hyperref}
\usepackage[round]{natbib}

\bibliographystyle{apalike}

\usepackage[ruled,vlined]{algorithm2e}
\usepackage{amsmath}
\usepackage{amsthm}
\usepackage{amssymb}
\usepackage{stmaryrd}
\usepackage{subfiles}
\usepackage{dsfont}
\usepackage{booktabs}
\usepackage{graphicx}
\usepackage{xcolor}

\theoremstyle{plain}

\theoremstyle{proposition}
\newtheorem{proposition}{Proposition}[section]
\theoremstyle{lemma}

\newcommand\XS{\boldsymbol{X}_S}
\newcommand\X{\boldsymbol{X}}
\newcommand\xs{\boldsymbol{x}_S}
\newcommand\XSb{\boldsymbol{X}_{\bar{S}}}

\newcommand\x{\boldsymbol{x}}



\begin{document}

%

%

\twocolumn[

\aistatstitle{Accurate Shapley Values for explaining tree-based models}

\aistatsauthor{ Salim I. Amoukou  \And  Tangi Salaün \And Nicolas J.B. Brunel}

\aistatsaddress{ University Paris Saclay \\ LaMME \\ Stellantis \And Quantmetry \And  University Paris Saclay, ENSIIE \\ LaMME \\ Quantmetry} ]

\begin{abstract}
Shapley Values (SV) are widely used in explainable AI, but their estimation and interpretation can be challenging, leading to inaccurate inferences and explanations. As a starting point, we remind an invariance principle for SV and derive the correct approach for computing the SV of categorical variables that are particularly sensitive to the encoding used. In the case of tree-based models, we introduce two estimators of Shapley Values that exploit the tree structure efficiently and are more accurate than state-of-the-art methods. Simulations and comparisons are performed with state-of-the-art algorithms and show the practical gain of our approach. Finally, we discuss the limitations of Shapley Values as a local explanation. These methods are available as a \href{https://github.com/salimamoukou/acv00}{Python package}.
\end{abstract}

\addtocontents{toc}{\protect\setcounter{tocdepth}{0}}
\section{Introduction}
\label{introduction}
The increasing use of Machine Learning (ML) models in industry, business, sciences, and society has brought the interpretability of these models to the forefront of ML research. As ML models are often considered as black-box models, there is a growing demand from scientists, practitioners, and citizens for tools that can provide insights into important variables in predictions or identify biases for specific individuals or sub-groups. Standard global importance measures like permutation importance measures \citep{breiman2001random} are insufficient for explaining individual or local predictions, and new methodologies are being developed in the active field of Explainable AI (XAI).

In this context, various local explanations have been proposed, focusing on model-agnostic methods that can be applied to the most successful ML models, such as ensemble methods like random forests and gradient boosted trees, as well as deep learning models. Some of the most widely used methods are the Partial Dependence Plot \citep{friedman2001}, Individual Conditional Expectation \citep{goldstein2014peeking}, and local feature attributions such as Local Surrogate (LIME) \citep{ribeiro2016why}. These techniques enable better understanding of the predictions made by a model for individual cases, providing transparency and trust in the ML models' decision-making processes. To achieve the same objective, Shapley Values \citep{shapley1953}, a concept developed primarily in Cooperative Game Theory, has been adapted for XAI to evaluate the "fair" contribution of a variable $X_i = x_i$ to a prediction $f(x_1, \dots, x_p)$\citep{strumbelj2010, NIPS2017_7062}. Shapley Values (SV) are widely used to identify important variables at both local and global levels. As remarked by \cite{lundberg2020local2global,covert2020explaining}, a lot of importance measures aim at analyzing the behavior of a prediction model $f$ based on $p$ features $X_1,\dots,X_p$ by removing variables and considering reduced predictors. Typically, for any group of variables $\XS=\left(X_{i}\right)_{i\in S}$, with any subset $S\subseteq\left\llbracket 1,p\right\rrbracket $ and reference distribution $Q_{S, \boldsymbol{x}}$,  the reduced predictor is defined as:
\begin{align}
  f_{S}(\boldsymbol{x}_{S})\triangleq E_{Q_{S,\boldsymbol{x}}}\left[f\left(\boldsymbol{x}_{S},\boldsymbol{X}_{\bar{S}}\right)\right],
 \label{eq:ReducedPredictor} 
\end{align}
where $Q_{S, \boldsymbol{x}}$ is the conditional distribution $\XSb\vert \XS = \xs$. Other SV can be defined with the marginal probabilities but their interpretation is different \citep{heskes2020causal, janzing2019feature, chen2020true}. There are still active debates on using or not conditional probabilities \citep{frye2020shapley}. This work focuses only on conditional SV, as estimating them poses significant challenges. The SV for explaining the prediction $f(\boldsymbol{x})$ have been introduced in \citep{NIPS2017_7062} and are based on a cooperative game with value function $v(S) \triangleq f_{S}(\boldsymbol{x}_{S})$. For any group of variables $C\subseteq\left\llbracket 1,p\right\rrbracket$ and $k\in\left\llbracket 1,p-\left|C\right|\right\rrbracket $, we denote the set $\mathcal{S}_{k}(C)=\Big\{ S\subseteq\left\llbracket 1,p\right\rrbracket \backslash C : \left|S\right|=k\Big\}$ and we introduce a straightforward generalization of the SV for coalition $C$ as 
\begin{multline}
 \phi_{X_C}(f) \triangleq \frac{1}{p-\left|C\right|+1} \sum_{k=0}^{p-\left|C\right|}\frac{1}{\binom{p-\left|C\right|}{k}} \\
 \sum_{S\in\mathcal{S}_{k}(C)} \big[f_{S\cup C}(\boldsymbol{x}_{S\cup C}) - f_{S}(\boldsymbol{x}_{S})\big].
\label{eq:classic_shapley_game} 
\end{multline}
This definition of the Shapley Value serves as a generalization of the classical SV for a single variable. By considering the singleton $C=\{i\}$ for $i\in\left\llbracket 1,p\right\rrbracket$, we retrieve the standard definition for feature $X_i$. In the following section, we show how this definition arises naturally when measuring the impact of a group of variables $C$, particularly in the case of categorical variables.

We propose solutions for computing and estimating the Shapley Values (SV). We focus solely on tree-based models due to their reduced computational cost and easier statistical handling. We demonstrate that the current state-of-the-art algorithm for tree-based models, TreeSHAP \citep{lundberg2020local2global}, is highly biased when features are dependent. Consequently, we introduce statistically principled estimators to improve the estimation of the SV. Additionally, we address the theoretical computation of SV for categorical variables when using standard encodings, which motivates the use of equation \ref{eq:classic_shapley_game}. Specifically, we show that the true SV of a categorical variable is different from the sum of SVs of encoded variables, as generally used. Moreover, using the sum of encoded variables as the SV of a categorical variable provides incorrect estimates of all SVs in the model and leads to spurious interpretations. This is currently the only way to handle categorical variables with TreeSHAP, and we therefore highlight the correct method for computing the SV of encoded variables and implement it using our estimators. Our contributions, which reduce bias in SV estimation, are implemented in a \href{https://github.com/salimamoukou/acv00}{Python package}.

The paper is structured as follows. In the next section, we derive invariance principles for SV under reparametrization or encoding, which is particularly useful for dealing with categorical variables. In section 3, we introduce two estimators of reduced predictors and SV. In section 4, we highlight the improvement over dependent TreeSHAP. Finally, we discuss the reliability of SV in providing local explanations.

\section{Coalition and Invariance for Shapley Values \label{sec:Invariance}}

In this section, we present a unifying property of invariance for the Shapley Values of continuous and categorical variables. The property states that the explanation provided by a variable should not depend on the way it is encoded in a model. This invariance property provides a natural way of computing the SV of categorical variables based on the notion of coalition and the general definition given in equation \ref{eq:classic_shapley_game}. This is especially useful in our case, as we are also interested in the discretization of continuous variables to facilitate the estimation of Shapley Values and enhance their stability, which we will discuss in Section  \ref{sec:TreeBased}.

\subsection{Invariance under reparametrization for continuous variables}
In equation \ref{eq:classic_shapley_game}, there is no restriction on the dimension of $X_i$. We assume that the $p$ variables are vector-valued, i.e., $X_i \in \mathbb{R}^{p_i}$ where $p_i\geq 1$. We further assume that each variable $X_i$ is transformed with a diffeomorphism $\varphi_i: \mathbb{R}^{p_i} \longrightarrow \mathbb{R}^{p_i}$. We introduce the transformed variables $U_i \triangleq \varphi_i(X_i)$ and the reparametrized model $\Tilde{f}$ defined by $\Tilde{f}(U_1,\dots,U_p) = f( X_1,\dots,X_p)$, i.e., $\Tilde{f}(U_1,\dots,U_p) = f \circ \boldsymbol{\varphi}^{(-1)}(\boldsymbol{\boldsymbol{U}})$ where $\varphi = (\varphi_1,\dots,\varphi_p)$. Generally, we cannot relate the predictor learned from the real dataset $\{(\X_i, Y_i)\}_{i=1}^{n}$ to the predictor learned from the transformed dataset $\{(\boldsymbol{U}_i,Y_i)\}_{i=1}^n$ where $Y$ is the label to predict. Estimation procedures are not invariant with respect to reparametrization, which means we obtain different predictors after "diffeomorphic feature engineering": $f_{\boldsymbol{U}} \neq f_{\X} \circ \varphi$. Consequently, we focus only on the impact of reparametrization on explanations, and we show below that the Shapley Values are invariant under reparametrization.
\begin{proposition}\label{prop:reparam}
Let $f$ and $\Tilde{f} = f \circ \boldsymbol{\varphi}^{(-1)} $ its reparametrization, then we have $\forall i \in \left\llbracket 1,p\right\rrbracket$, and $\boldsymbol{U} = \boldsymbol{\varphi}(\X)$: 
\begin{align}
     \phi_{X_i}(f) = \phi_{U_i}(\tilde{f}). \label{eq:equivariance} 
\end{align}
\end{proposition}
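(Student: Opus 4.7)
The plan is to reduce the claim to an invariance property of the reduced predictors themselves, since the Shapley Value in equation \ref{eq:classic_shapley_game} is just a fixed linear combination of such reduced predictors. Concretely, I would show that for every subset $S \subseteq \llbracket 1,p \rrbracket$ and every $\boldsymbol{x}$, if $\boldsymbol{u} = \boldsymbol{\varphi}(\boldsymbol{x})$, then $\tilde{f}_S(\boldsymbol{u}_S) = f_S(\boldsymbol{x}_S)$. Once this pointwise identity is established, plugging it into the generalized Shapley formula with $C=\{i\}$ term by term immediately gives $\phi_{U_i}(\tilde{f}) = \phi_{X_i}(f)$.

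The first step is to unpack the definition of $\tilde{f}_S$: by construction $\tilde{f}_S(\boldsymbol{u}_S) = E[\tilde{f}(\boldsymbol{u}_S,\boldsymbol{U}_{\bar S}) \mid \boldsymbol{U}_S = \boldsymbol{u}_S] = E[f(\boldsymbol{\varphi}^{-1}(\boldsymbol{U})) \mid \boldsymbol{U}_S = \boldsymbol{u}_S]$. The next step, which is the conceptual heart of the argument, is to observe that since each $\varphi_i$ is a diffeomorphism on $\mathbb{R}^{p_i}$ it is in particular a Borel bijection, so $\sigma(\boldsymbol{U}_S) = \sigma(\boldsymbol{\varphi}_S(\boldsymbol{X}_S)) = \sigma(\boldsymbol{X}_S)$ as sub-$\sigma$-algebras. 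Combined with the deterministic identity $\boldsymbol{X} = \boldsymbol{\varphi}^{-1}(\boldsymbol{U})$, this gives the equality of conditional expectations $E[f(\boldsymbol{X}) \mid \boldsymbol{U}_S = \boldsymbol{u}_S] = E[f(\boldsymbol{X}) \mid \boldsymbol{X}_S = \boldsymbol{\varphi}_S^{-1}(\boldsymbol{u}_S)] = f_S(\boldsymbol{x}_S)$, which is exactly the desired pointwise identity.

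The last step is essentially bookkeeping: for each subset $S$ in the Shapley sum, $\tilde{f}_{S \cup \{i\}}(\boldsymbol{u}_{S\cup\{i\}}) - \tilde{f}_S(\boldsymbol{u}_S) = f_{S\cup\{i\}}(\boldsymbol{x}_{S\cup\{i\}}) - f_S(\boldsymbol{x}_S)$, and the combinatorial weights $\frac{1}{p}\binom{p-1}{k}^{-1}$ depend only on $p$ and $|S|$, both of which are preserved by the reparametrization (since each $\varphi_i$ acts coordinate-wise on the grouping $X_1,\dots,X_p$). Summing gives equation \ref{eq:equivariance}.

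The main obstacle, and the only place where the diffeomorphism hypothesis is really used, is the identification of the two conditional expectations. Nothing about smoothness or the Jacobian is actually needed for this invariance; measurable bijectivity of each $\varphi_i$ suffices, so the step to watch carefully is that we condition coordinate-wise (on the block $\boldsymbol{X}_S$) and that $\boldsymbol{\varphi}$ does not mix coordinates across blocks, which is precisely what is assumed since $\varphi = (\varphi_1,\dots,\varphi_p)$ acts on each $X_i$ separately. If one tried to state the proposition for general (non-block-diagonal) diffeomorphisms $\boldsymbol{\varphi}$, the equality of $\sigma$-algebras $\sigma(\boldsymbol{U}_S) = \sigma(\boldsymbol{X}_S)$ would fail and the proposition itself would break.
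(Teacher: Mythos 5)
Your proof is correct, and its overall skeleton coincides with the paper's: both arguments reduce the claim to the pointwise identity $\tilde{f}_S(\boldsymbol{u}_S) = f_S(\boldsymbol{x}_S)$ for every coalition $S$, and then conclude by plugging this into the Shapley formula, whose weights depend only on $p$ and $|S|$. Where you diverge is in how that key identity is established. The paper assumes a joint density $g$ for $\boldsymbol{X}$, applies the change-of-variables formula to get the density of $\boldsymbol{U}$ with the product of Jacobian factors $\prod_{i\in\bar S}\lvert J(\varphi_i^{-1})(u_i)\rvert$, and verifies by direct integration that these factors cancel in the conditional expectation, yielding $E[f(\boldsymbol{X})\mid \boldsymbol{X}_S=\boldsymbol{x}_S]=E[\tilde f(\boldsymbol{U})\mid \boldsymbol{U}_S=\boldsymbol{u}_S]$. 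You instead argue at the level of $\sigma$-algebras: since each $\varphi_i$ is a bimeasurable bijection acting blockwise, $\sigma(\boldsymbol{U}_S)=\sigma(\boldsymbol{X}_S)$, and together with $f(\boldsymbol{X})=\tilde f(\boldsymbol{U})$ this gives the same equality of conditional expectations without ever invoking a density or a Jacobian. Your route is more general (it covers distributions without densities and transformations that are merely measurable bijections, which is relevant since the paper later wants to treat discretizations and encodings under the same invariance umbrella) at the small cost of a version-of-conditional-expectation caveat: the identity $E[f(\boldsymbol{X})\mid \boldsymbol{U}_S=\boldsymbol{u}_S]=E[f(\boldsymbol{X})\mid \boldsymbol{X}_S=\boldsymbol{\varphi}_S^{-1}(\boldsymbol{u}_S)]$ holds for almost every $\boldsymbol{u}_S$ (or for a fixed choice of regular conditional distributions), a point the paper's density computation sidesteps only because it implicitly fixes the density version. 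Your closing observation that a coordinate-mixing $\boldsymbol{\varphi}$ would break $\sigma(\boldsymbol{U}_S)=\sigma(\boldsymbol{X}_S)$, and hence the proposition, correctly identifies why the blockwise structure $\boldsymbol{\varphi}=(\varphi_1,\dots,\varphi_p)$ is essential.
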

The proof can be found in the Supplementary Material. This identity indicates that the information provided by each feature $X_i$ in the explanation is independent of any encoding, as mentioned by \cite{owen2017shapley, sage}. The SV primarily depends on the dependence structure of the features. Therefore, the Shapley Values of a feature $X_i$ remains the same after diffeomorphic transformation $\varphi$, we have $\phi_{X_i}(f) = \phi_{U_i}(\tilde{f})$. Suppose a variable $X_i$ is separated into $C$ correlated variables $\Tilde{X}_C = (X_i)_{i \in C}$, for instance, by discretizing a feature. As $X_i$ and $\Tilde{X}_C$ carry the same information, we may ask whether the SV of the group of features $\Tilde{X}_C$ is equal to the SV of $X_i$. In the next section, we provide an affirmative answer to this question.

\subsection{Invariance for encoded categorical variable}
In the remainder of the paper, we use $X$ to denote continuous predictive variables, $Z$ to denote categorical variables and $Y$ to denote the output of interest. While there are numerous encodings for a categorical variable $Z$ with modalities $\{1,\dots,K\}$, we focus on two popular methods: One-Hot-Encoding (OHE) and Dummy Encoding (DE). These methods introduce indicator variables $Z_k$ such that $Z_{k}=1$ if $Z=k$, $0$ otherwise.  
In contrast to the continuous case,  introducing indicator variables changes the number of "players" in the game defined for computing the Shapley Value. This change has significant consequences on the computation of the SV for all variables in the model. Hence, the widely adopted practice of summing the SV of indicator variables $Z_k$ to compute the SV of $Z$ is generally not justified and false. To benefit from a similar invariance result as Proposition \ref{prop:reparam}, we need to deal with the coalition of indicators and use the general expression of SV introduced in equation  \ref{eq:classic_shapley_game}.
For the sake of simplicity, we assume that the model $f$ uses only the two variables $(X, Z)$, where $X\in\mathbb{R}$ and $Z \in \{1,\dots,K\}$ is a categorical variable. The efficiency property of SV gives the decomposition
\begin{equation} 
f(x,z)-E_{P}\left[f\left(X,Z\right)\right]=\phi_{X}(f)+\phi_{Z}(f)\label{eq:Shapley_decomposition_2var}
\end{equation}

where $P$ denotes the law of $(X, Z)$. To establish the link between the SV of the indicator variables $Z_{k}$ and the SV of the variable $Z$, we introduce additional notations. We focus on the Dummy Encoding (DE) $\varphi:z\mapsto(z_{1},\dots,z_{K-1})$ without loss of generality. The variables $\left(X,Z_{1:K-1}\right)$ are defined on $\mathbb{R}\times \left\{0,1\right\}^{K-1}$, and their distribution $\tilde{P}$ is the image probability of $P$ induced by the transformation $\varphi$. The initial predictor  $f:\mathbb{R}\times\left\{ 1,\dots,K\right\} \longrightarrow\mathbb{R}$
is reparametrized as a function  $\tilde{f}:\mathbb{R}\times\left\{ 0,1\right\}^{K-1} \longrightarrow\mathbb{R}$
such that $f(X,Z)\triangleq\tilde{f}(X,Z_{1},\dots,Z_{K-1})$. The function $\tilde{f}$ is not completely defined for all $(z_{1}\dots,z_{K-1})\in \left\{ 0,1\right\}^{K-1}$
and is only defined $\tilde{P}$-almost everywhere because of the
deterministic dependence $\sum_{k=1}^{K-1}Z_{k}\leq1$. Consequently, we need to extend $\tilde{f}$ to the whole space $\mathcal{X}\times \left\{ 0,1\right\}^{K-1}$ by setting $\tilde{f}(x,z_{1},\dots z_{K-1})=0$ as soon as $\sum_{k=1}^{K-1}z_{k}>1$. For the predictor $\tilde{f}(X,Z_{1},\dots,Z_{K-1})$, we can compute the SV of $X,Z_{1},\dots,Z_{K-1}$ and obtain the following decomposition thank to the efficiency property of SV,
\begin{multline}  \tilde{f}(x, z_{1}, \dots, z_{K-1})-E_{\tilde{P}}\left[\tilde{f}\left(X, , Z_{1}, \dots, Z_{K-1}\right)\right] \\
= \phi_{X}(\tilde{f}) + \sum_{k=1}^{K-1}\phi_{Z_k}(\tilde{f})  \label{eq:ShapleyValue_XDummy_Indiv}
\end{multline}
where $\phi_{Z_k}(\tilde{f})$ are the SV of
the variable $Z_{k}$ computed with distribution $\tilde{P}$ and model $\tilde{f}$. Note that
\begin{align*}
    & f(x,z)-E_{P}\left[f\left(X,Z\right)\right]\\
    & = \tilde{f}(x, z_{1}, \dots, z_{K-1})-E_{\tilde{P}}\left[\tilde{f}\left(X, , Z_{1}, \dots, Z_{K-1}\right)\right].
\end{align*}
As a result, we have
\begin{align} 
\phi_{X}(f)+\phi_{Z}(f) =\phi_{X}(\tilde{f}) + \sum_{k=1}^{K-1}\phi_{Z_k}(\tilde{f})\label{eq:EqualityShapleyValues}
\end{align}
In general, we have $\phi_Z(f)\neq \sum_{k=1}^{K-1}\phi_{Z_k}(\tilde{f})$, because the SV depends on the number of variables and they are not calculated using the same quantities.  We show in the next proposition that $\phi_Z(f) = \phi_{Z_{1:K-1}}(\tilde{f})$ where $\phi_{Z_{1:K-1}}(\tilde{f})$ is computed with equation \ref{eq:classic_shapley_game} and corresponds to the Shapley Values of the coalition of the variables $(Z_1, ..., Z_{K-1})$.

\begin{proposition}\label{prop:SV_CoalQualitative}
 Given a predictor  $f:\mathbb{R}\times\left\{ 1,\dots,K\right\} \longrightarrow\mathbb{R}$ and its reparametrization $\tilde{f}$ using Dummy Encoding  $\tilde{f}:\mathbb{R}\times\left\{ 0,1\right\}^{K-1} \longrightarrow\mathbb{R}$ such that $f(X,Z)\triangleq\tilde{f}(X,Z_{1},\dots,Z_{K-1})$, we have
 
 \begin{equation}
\left\{ \begin{array}{lll}
 \phi_{Z_{1:K-1}}(\tilde{f}) & = & \phi_{Z}(f)\\
 \phi_{X}(\tilde{f}; Z_{1:K-1}) & = & \phi_{X}(f),
\end{array}\right.\label{eq:Shapley_CoalitionDummies}
\end{equation}
\end{proposition}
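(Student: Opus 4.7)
My plan is to expand both sides of (7) directly from the general coalition formula (3), treating $C = \{Z_1, \ldots, Z_{K-1}\}$ as a single ``super-player'' and then matching the reduced predictors under $\tilde P$ with those under $P$. For the first identity, the game has $p = K$ players and $|C| = K-1$, so $p - |C| = 1$ and the double sum in (3) collapses to only two terms, indexed by $S = \emptyset$ and $S = \{X\}$:
\begin{align*}
\phi_{Z_{1:K-1}}(\tilde f) = &\tfrac{1}{2}\bigl[\tilde f_{C}(z_1, \ldots, z_{K-1}) - \tilde f_{\emptyset}\bigr] \\
& + \tfrac{1}{2}\bigl[\tilde f_{C \cup \{X\}}(x, z_1, \ldots, z_{K-1}) - \tilde f_{\{X\}}(x)\bigr].
\end{align*}
The target $\phi_Z(f)$ admits the analogous two-term expansion involving $f_{\emptyset}$, $f_{\{X\}}(x)$, $f_{\{Z\}}(z)$ and $f(x,z)$, so it suffices to match the four pieces one-by-one.

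\textbf{Matching reduced predictors.} The heart of the argument is to verify that each reduced predictor built from $\tilde f$ under $\tilde P$ coincides with the corresponding reduced predictor built from $f$ under $P$. Writing $\boldsymbol{z} = \varphi(z) \in \{0,1\}^{K-1}$, the dummy encoding $\varphi$ is a bijection from $\{1,\ldots,K\}$ onto its image, and $\tilde P$ is supported on this image. Conditioning on $(Z_1,\ldots,Z_{K-1}) = \boldsymbol{z}$ under $\tilde P$ is therefore identical to conditioning on $Z = z$ under $P$, and marginalising over $(Z_1,\ldots,Z_{K-1})$ under $\tilde P$ is identical to marginalising over $Z$ under $P$. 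This immediately yields $\tilde f_{\emptyset} = f_{\emptyset}$, $\tilde f_{\{X\}}(x) = f_{\{X\}}(x)$, $\tilde f_{C}(\boldsymbol{z}) = f_{\{Z\}}(z)$, and $\tilde f_{C \cup \{X\}}(x,\boldsymbol{z}) = f(x,z)$. Substituting these four equalities in the expansion above gives exactly $\phi_Z(f)$.

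\textbf{Main obstacle and second identity.} The main subtlety, which I would write out explicitly, is the support issue: the extension $\tilde f(x, z_1, \ldots, z_{K-1}) = 0$ on the region $\{\sum_k z_k > 1\}$ is a free choice made only to get a function defined everywhere, and one must argue that this extension never enters any integral because the relevant conditional law under $\tilde P$ assigns zero probability to that region. Once this is cleanly stated, every reduced predictor of $\tilde f$ under $\tilde P$ is, by the bijection between $\{1,\ldots,K\}$ and the image of $\varphi$, literally a reduced predictor of $f$ under $P$, and the first identity follows. The second identity $\phi_X(\tilde f; Z_{1:K-1}) = \phi_X(f)$ is obtained by the exact same argument, swapping the roles of $X$ and $C$ in (3): the formula again reduces to the same two differences of the same four reduced predictors, and the same matching closes the proof.
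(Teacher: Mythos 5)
Your proposal is correct and follows essentially the same route as the paper: expand the coalition Shapley Value into the two-player form (the coalition $Z_{1:K-1}$ versus $X$, giving the four reduced predictors $\tilde f_{\emptyset}$, $\tilde f_{\{X\}}$, $\tilde f_{C}$, $\tilde f_{C\cup\{X\}}$) and identify each with its counterpart under $P$ via the bijectivity of the dummy encoding and the image-measure property of $\tilde P$, exactly as the paper does through its explicit change-of-variables computations. Your explicit remark that the arbitrary extension of $\tilde f$ on $\{\sum_k z_k>1\}$ never enters any integral is a point the paper leaves implicit, but it is the same argument, not a different one.
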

where $\phi_{X}(\tilde{f}; Z_{1:K-1})$ is the SV of $X$ when the variables $(Z_1, ..., Z_{K-1})$ are considered as a single variable. We refer to Supplementary Material for detailed derivations. In general, for cooperative games, the SV of a coalition
$\phi_{Z_C}(\tilde{f})$ with $C \subseteq \{1, \dots, K-1\}$ is different from the sum of individual
SV $\sum_{k\in C}\phi_{Z_k}(\tilde{f})$. We
 note that we can compute two different SV for $X$ when we use the reparametrized predictor $\tilde{f}$: $\phi_{X}(\tilde{f})$ and $\phi_{X}(\tilde{f}; Z_{1:k})$. These two SV are different in general as they involve different numbers of variables and different conditional expectations. Proposition \ref{prop:SV_CoalQualitative} shows that we should prefer $\phi_{X}(\tilde{f}; Z_{1:k})$  as it is equal to the SV of $X$ in the original model $\phi_X(f)$.

\subsection{Coalition or Sum: numerical comparisons}
We give numerical examples illustrating the differences between coalition or sum on the corresponding explanations. We consider a linear predictor $f$, with 1 categorical and 3 continuous variables $(X_1, X_2, X_3)$, defined as $f(X, Z) = B_Z X$ with $X|Z=z \sim \mathcal{N}(\mu_z, \Sigma_z)$ and $\mathbb{P}(Z=z)= \pi_z$, $Z \in \{a, b, c\}$. The values of the parameters used in our experiments are found in the Supplementary Material. In figure \ref{fig:toy_model_examples_encoding}, we remark that the SV change dramatically w.r.t the encoding. The sign changes given the encoding (DE or OHE) and is often different from the sign of the true SV of $Y$ without encoding. We can also note important differences in the SV of the quantitative variable $X$. 

To quantify the global difference of the different methods, we compute the relative absolute error (R-AE) of the SV defined as:
\begin{equation} \label{eq:R-AE}
    \text{R-AE}(f, \tilde{f}) = \sum_{i=1}^{p} \frac{|\phi_{X_i}(f) - \phi_{X_i}(\tilde{f})|}{|\phi_{X_i}(f)|}
\end{equation}

\begin{figure}[ht]
    \centering
    \includegraphics[width=0.7\linewidth]{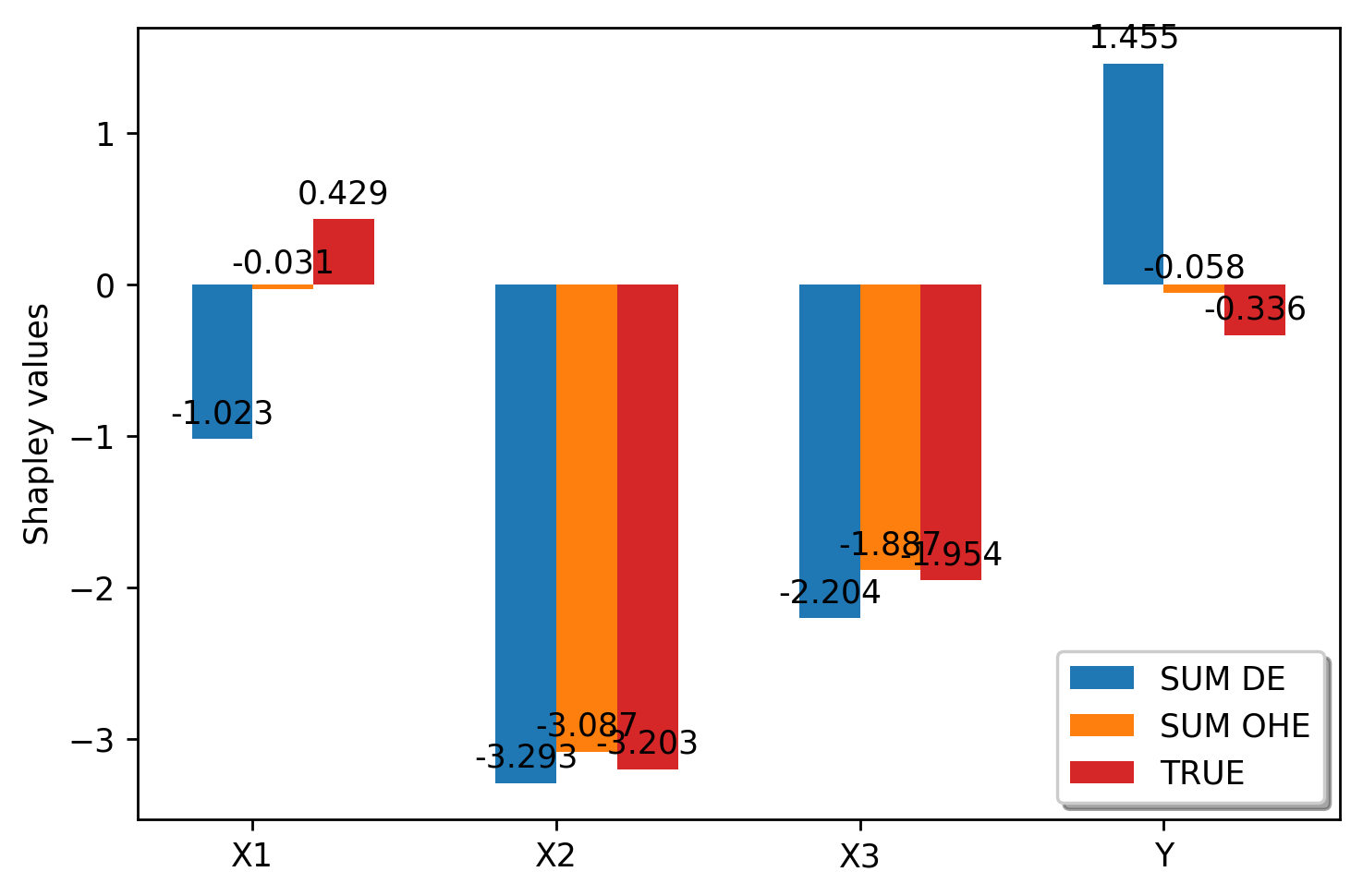}
    \caption{SV with or without encoding (OHE - DE) for observation $(X_1, X_2, X_3) = [0.35, -1.61, -0.11], Z = a$}
    \label{fig:toy_model_examples_encoding}
\end{figure}

We compute the SV of 1000 new observations. We observe in figure \ref{fig:toy_model_examples_error} that the differences can be huge for almost all samples (DE is much worse than OHE in this example). Thus, we highly recommend using the coalition as it is consistent with the true SV contrary to the sum.  More examples on real datasets can be found in the Supplementary Material.
\begin{figure}[ht]
    \centering
    \includegraphics[width=0.7\linewidth]{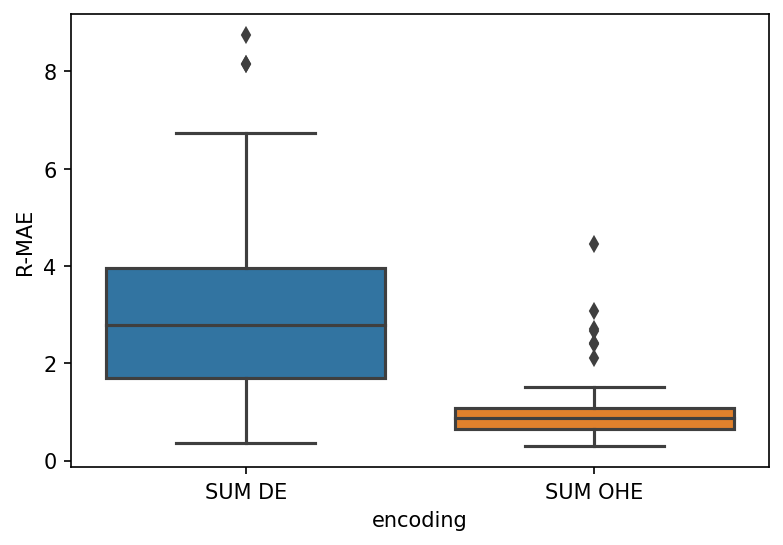}
    \caption{R-AE distribution between the SV without encoding and the corresponding encodings}
    \label{fig:toy_model_examples_error}
\end{figure}

\section{Shapley Values for tree-based models\label{sec:TreeBased}}
\label{shap_values_estimations}
The computation of Shapley Values (SV) faces two main challenges: the combinatorial explosion with $2^p$ coalitions to consider and the estimation of conditional expectations $f_S(\xs)=E\left[f(\boldsymbol{X})\vert\boldsymbol{X}_{S} = \xs\right], S\subseteq\left\llbracket 1,p\right\rrbracket$. Current approaches rely on several approximations and sampling procedures that assume independence \citep{NIPS2017_7062, improvingkernelCovert}. More recently, some methods propose to model the joint distribution of features with a gaussian distribution or vine copula to draw samples from the conditional distributions \citep{aas2020explaining, aas2021explaining}. Other methods, such as \citep{williamson2020efficient}, train one model for each selected subset $S$ of variables, which is accurate but computationally costly.  However, their final objective differs from ours since we are interested in local estimates and exact computations (i.e., no sampling of the subsets). To achieve this, we focus on tree-based models, as exploited by \cite{lundberg2020local2global} for deriving an algorithm (TreeSHAP) for exact computation of SV, where we can compute all the terms (no sampling of the subsets $S\subseteq\left\llbracket 1,p\right\rrbracket$) and the estimation of the conditional expectations is simplified.
After briefly presenting the limitations of TreeSHAP, we introduce two new estimators that use the tree structure.  For simplicity, we consider a single tree and not an ensemble of trees (Random Forests, Gradient Tree Boosting, etc.), as extending our estimators to these more complex models is straightforward through linearity.

\subsection{Algorithms for computing Conditional Expectations and the Tree SHAP algorithm}
We consider a tree-based model $f$ defined on $\mathbb{R}^p$ (categorical variables are one-hot encoded). We have 
$f(\boldsymbol{x}) = \sum_{m=1}^{M} f_m \mathds{1}_{L_m}(\boldsymbol{x})$ where $L_m$ represents a leaf. The leaves form a partition of the input space, and each leaf can be written as $L_{m}=\prod_{i=1}^{p}\left[a_{i}^{m},b_{i}^{m}\right]$
with $-\infty\leq a_{i}^{m}<b_{i}^{m}\leq+\infty$. 
Alternatively, we  write the leaf with the decision path perspective: a leaf $L_m$ is defined by a sequence of decision based on $d_m$ variables $X_{N_k^m},\, k=1,\dots,d_m, N_k^m \in \{1,\dots, p\}$. For each node $k$ in the path of the leaf $L_m$, $X_{N_k^m}$ is the variable used to split, and the region $I_{k}^m$ defined by the split value $t_k^m$ is either $]-\infty, t_k^m]$ or $[t_k^m,+\infty[$. The leaf can be rewritten as
\begin{align}
     L_m = \left\{ \boldsymbol{x}\in \mathbb{R}^p: x_{N_1^m} \in I_{1}^m, \dots, x_{N_{d_m}^m}\in I_{d_m}^m\right\}.
    \label{eq:lm}
\end{align}
The crucial point is to identify the set of leaves compatible with the condition $\XS=\xs$. We can partition the leaf according to a coalition $S$ as $L_{m}=L_{m}^{S}\times L_{m}^{\bar{S}}$ with $L_{m}^{S}=\prod_{i\in S}\left[a_{i}^{m},b_{i}^{m}\right]$
and $L_{m}^{\bar{S}}=\prod_{i\in\bar{S}}\left[a_{i}^{m},b_{i}^{m}\right]$. Thus, for each condition $\boldsymbol{X}_{S}=\boldsymbol{x}_{S}$ the set of compatible leaves of $\boldsymbol{x}=(\boldsymbol{x}_{S},\boldsymbol{x}_{\bar{S}})$ is
\begin{align*}
     C\left(S,\boldsymbol{x}\right)& =\left\{ m\in\left[1\dots M\right]: \boldsymbol{x}_{S}\in L_{m}^{S}\right\}\\
     & = \left\{ m\in\left[1\dots M\right] : x_{N_i^m} \in I_{i}^m,\, N_i^m \in S \right\}
\end{align*}
and the reduced predictor $f_S(\xs)$ has the simple expression
\begin{equation*}
     f_S(\xs) = \sum_{m\in C(S,\boldsymbol{x})}  f_m P_X(L_m \vert \XS=\xs) 
\end{equation*}
where the probability is computed under the law of the features $P_X$.  If we have a model for $P_X$, we can derive the conditional law and directly evaluate the conditional probabilities. For instance, when $\boldsymbol{X} \sim \mathcal{N}(\mu, \Sigma)$, we can compute exactly the conditional probabilities $P_X(L_m | \boldsymbol{X}_S=\boldsymbol{x}_S) = P_X \left( \prod_{k=1}^{d_m} I_{k}^m \vert \XS=\xs \right)$. In general, deriving conditional probabilities can be challenging, but assumptions about the factorization of the distribution can accelerate the computation. In \citep{lundberg2020local2global}, the authors introduce a recursive algorithm (TreeSHAP with path-dependent feature perturbation, Algorithm 1) that assumes that the probabilities for every compatible leaf $L_m$ can be factored with the decision tree, which simplifies the computation as
{ \small 
\begin{multline} 
   \small  P_X^{SHAP} \left( \prod_{k=1}^{d_m} I_{k}^m \; \Big| \XS=\xs\right) = \delta_S(N_1^m) \times \\
    \small  \prod_{i=2: N_i^m \notin S}^{d_m }  P\left(X_{N_{i}} \in I_{i}^m \; \Big| \prod_{k=2 : N_k^m \notin S}^{i} X_{N_{k-1}} \in I_{k-1}^m\right) \label{eq:ProbaSHAP}
\end{multline}}with $\delta_S(N_1)=P(X_{N_{1}^m} \in I_1^m)$ if $N_1\notin S$, and 1 otherwise. The underlying assumption in (\ref{eq:ProbaSHAP}) is that we have a Markov property defined by the path in the tree, see the algorithm description in the Supplementary Material. However, as we will demonstrate in our simulations, this assumption is too strong and leads to a high estimation bias.  We denote $\widehat{f}_S^{SHAP}$ and $\phi_{X_i}(\widehat{f}_S^{SHAP})$ as the estimators of the reduced predictor and Shapley Values using the estimator (\ref{eq:ProbaSHAP}). Therefore, we propose two estimators that do not rely on the factorization of $P_X$.

\subsection{Statistical Estimation of Conditional Expectations \label{subsec:Closed-form-Expectations} }
\textbf{Discrete case}. To address the statistical problem of estimating the probabilities $P_X(L_m | \boldsymbol{X}_S=\boldsymbol{x}_S)$ from a given dataset $\mathcal{D} =\{\X_i\}_{i=1}^{n}$, where $\X_i \sim P_X$, without assuming any density or prior knowledge about $P_X$ as done in \cite{aas2020explaining, aas2021explaining}, we first consider the case where all variables are categorical. This allow us to estimate $P_{X}\left(L_{m}\left|\XS=\xs\right.\right)$ directly. A straightforward estimation is based on $N(\boldsymbol{x}_{S})$, which is the number of observations in $\mathcal{D}$ such that $\boldsymbol{X}_{S}=\boldsymbol{x}_{S}$, and $N(L_{m},\boldsymbol{x}_{S})$, which is the number of observations in leaf $L_{m}$ of $\mathcal{D}$ that satisfy the condition $\boldsymbol{X}_{S}=\boldsymbol{x}_{S}$. An accurate estimation of the conditional probability $P_X(L_m | \boldsymbol{X}_S=\boldsymbol{x}_S)$ can be obtained by computing the ratio of these two terms as 
\begin{align}
\widehat{P}_{X}^{(D)}\left(L_{m}\left|\boldsymbol{X}_{S} = \boldsymbol{x}_{S}\right.\right) = \frac{N(L_{m},\boldsymbol{x}_{S})}{N(\boldsymbol{x}_{S})}.    
\label{eq:cat_pluging}
\end{align}
Estimating the conditional probabilities $P_X(L_m | \boldsymbol{X}_S=\boldsymbol{x}_S)$ becomes more challenging when the variables $\boldsymbol{X}_{S}$ are continuous. A common approach is to use kernel smoothing estimators \citep{nadaraya1964estimating}. However, this method has several drawbacks, such as a low convergence rate in high dimensions and the need to derive and select appropriate bandwidths, which can add complexity and instability to the estimation procedure. To address these issues, we propose a simple approach based on quantile-discretization of the continuous variables. This technique is commonly used for easing model explainability, particularly in tree-based models, as shown in \citep{benard2021interpretable}. Binning observations can also help to stabilize the reduced predictors and Shapley Values, thus improving the robustness of the explanation \citep{alvarez2018robustness}.\\
In our experiments, we use a simple approach to discretize continuous variables into $q=10$ quantiles, where each feature $X_i$ is encoded with indicator variables $X_i^{(r)}, r\in \{1,\dots,q\}$. Let $\widehat{q}_i^{(r)}$ denote the empirical $(\frac{r}{q})$-th quantile of feature $X_i$ using dataset $\mathcal{D}$, and let $\widehat{q}_i^{(0)} = -\infty$ and $\widehat{q}_i^{(q)} = +\infty$. We define $X_i^{(r)} = 1$ if $X_i$ falls into the interval $[\widehat{q}_i^{(r-1)}, \widehat{q}_i^{(r)})$. To compute the Shapley Values of a given feature $X_i$, we use the coalition of its indicator variables $\left(X_i^{(1)},\dots,X_i^{(q)} \right)$ as defined in Proposition \ref{prop:SV_CoalQualitative}. Then, we define the Discrete reduced predictor denoted by $\widehat{f}_S^{D} (\boldsymbol{x}S)$ as
\begin{equation} \widehat{f}_S^{D} (\boldsymbol{x}_S)= \sum_{m\in C(S,\boldsymbol{x})} f_m \widehat{P}_{X}^{(D)}\left( L_{m}|\XS = \xs \right) \label{eq:CategoryPredictor}
\end{equation}
and the corresponding estimator of the SV is denoted $\phi_{X_i}(\widehat{f}^{D})$. Although the discretization of continuous variables leads to some loss of information, it is often negligible in terms of performance when using tree-based models, as shown in the Supplementary Material. With only $q=10$ quantiles, the input space is divided into a fine grid of $p^{10}$ cells,  which provides a rich representation of the data. However, this is also a limitation, as the number of cells increases rapidly with $p$ and the number of categories per variable, leading to high variance for cells with low frequencies. Therefore, we propose an alternative estimator that leverages the leaf information provided by the decision tree, allowing for faster SV computation.

\textbf{Continuous and mixed-case}. Instead of discretizing the variables, we use the leaves of the estimated trees. Essentially, we replace the conditions $\left\{ \boldsymbol{X}_{S}=\boldsymbol{x}_{S}\right\} $ by $\left\{ \boldsymbol{X}_{S}\in L_{m}^{S}\right\} $.
This change introduces a bias, but it aims at improving the variance during estimation. 
We introduce the Leaf-based estimator as
\begin{equation}  
\small \widehat{f}_S^{(Leaf)} (\xs)= \frac{1}{Z(S,\boldsymbol{x})} \sum_{m\in C(S,\boldsymbol{x})} f_m \widehat{P}_{X}^{(Leaf)}\left(L_{m}\left|\XS \in L_m^S\right.\right) \label{eq:LeafPredictor}
\end{equation} 
where $\widehat{P}_{X}^{(Leaf)}\left(L_{m}\left|\XS \in L_m^S\right.\right)$ is an estimate of the conditional probability, and $Z(S,\boldsymbol{x})$ is a normalizing constant.
The definition of every probability estimate is  
\[\widehat{P}_{X}^{(Leaf)}\left(L_{m}\left|\XS \in L_m^S\right.\right) = \frac{N(L_{m})}{N(L_{m}^{S})} \] 
where $N(L_{m})$ is the number of observations of $\mathcal{D}$ in the leaf $L_{m}$, and $N(L_{m}^{S})$ is the number of observations of $\mathcal{D}$  satisfying the conditions
$\boldsymbol{x}_{S}\in L_{m}^{S}$. Another interpretation of this estimator is that it projects the partition of the tree along the direction defined by the variables $\X_S$. This results in a projected tree that only considers the variables $\XS$, which is then used to estimate the conditional probability $E[f(\X) \vert \XS = \xs]$. It is important to note that the probability estimates do not necessarily sum up to one as we are not conditioning on the same event, i.e.,
\begin{align*}
     \sum_{m\in C(S,\boldsymbol{x})} \widehat{P}_{X}^{(Leaf)}\left(L_{m}\left|\XS \in L_m^S\right.\right) \neq 1.
\end{align*}
Therefore, we introduce a normalizing constant to ensure the probabilities are correctly normalized.
This normalizing constant is defined as 
\[ Z(S, \boldsymbol{x}) = \sum_{m\in C\left(S,\boldsymbol{x}\right)} \frac{N(L_{m})}{N(L_{m}^{S})}.\]
The Leaf-based reduced predictor (\ref{eq:LeafPredictor}) can be computed for continuous and categorical variables, and hence we can compare it with $\widehat{f}_S^{(D)}$ in order to evaluate its bias. In both cases, the main challenge is the computation of $C(S,\boldsymbol{x})$, for every coalition $S$. We show in the next section how the computational complexity of the SV  $\phi_{X_i}(\widehat{f}^{(Leaf)})$ is drastically reduced. Indeed, when we consider the leaf $L_m$, we only have to compute the SV for $d_m$ variables, which corresponds to the variables used for splitting in the leaf $L_m$, and not for all the $p$ variables.  

\textbf{Bias analysis}. Before employing the two proposed estimators to calculate the Shapley Values, we first analyze the bias of these estimators. When the variables are discrete, it is obvious that the discrete estimator $\widehat{f}_S^{(D)}$ is consistent. However, in the case of the leaf estimator $\widehat{f}_S^{(Leaf)}$, we analyze its bias with respect to the true reduced predictor $f_S(\xs)$ below. 
{\small \begin{equation*}
    \begin{split}
        & \small \widehat{f}_S^{(Leaf)}(\xs) - f_S(\xs) \\
    & =  \sum_{m\in C(S,\boldsymbol{x})} f_m \Big[ \widehat{P}_{X}^{(Leaf)}(L_{m} \vert \XS \in L_m^S) - P_X(L_m \vert \XS=\xs) \Big]\\
    & \small = \sum_{m\in C(S,\boldsymbol{x})} f_m \Big[ \textcolor{teal}{\small \widehat{P}_{X}^{(Leaf)}(L_{m}\vert\XS \in L_m^S)  - P_{X}(L_{m}\vert\XS \in L_m^S)} \Big]\\
    & \small \ + \sum_{m\in C(S,\boldsymbol{x})} f_m \Big[\textcolor{red}{P_{X}\left(L_{m}\left|\XS \in L_m^S\right.\right)  - P_{X}\left(L_{m}\left|\XS = \xs\right.\right)}\Big]
    \end{split}
\end{equation*}}The control of the blue term is well established, and its rate of convergence is known. Recently, \cite{grunewalder2018plug} (Proposition 3.2) shows that if $\widehat{P}_{X}^{(Leaf)}\left(L_{m}\left|\XS \in L_m^S\right.\right) \geq n^{-\alpha}$, with $\alpha \in [0, 1/2)$, then $\big|\widehat{P}_{X}^{(Leaf)}\left(L_{m}\left|\XS \in L_m^S\right.\right)  - P_{X}\left(L_{m}\left|\XS \in L_m^S\right.\right)\big| = \mathcal{O}_P(n^{\alpha - 1/2})$.

The second term depends on the quality of the partition obtained from the tree. The intuition behind the effectiveness of tree-based models is that they group observations with similar conditional laws in each cell. Indeed, one of the assumptions to prove the consistency of tree-based models is that the variation of the conditional law is zero in each leaf, i.e., for all $\x \in L_m$ and $r \in \mathbb{R}$, we have $\sup_{\boldsymbol{z} \in L_m} |F(r | \boldsymbol{z}) - F(r|\boldsymbol{x})| \overset{a.s}{\to} 0$ \citep{scornet2015consistency, meinshausen2006quantile, elie2020random}, or alternatively, show that the diameter of the leaves tends to 0 \citep{gyorfi2002distribution}. The latter ensures that the probability $P_X(L_m \vert \XS = \xs)$ varies slightly as we move within a given cell if $\X$ admits a continuous density. Therefore, if the diameter of the leaves tends to 0 as generally assumed for partition-based estimator \citep{gyorfi2002distribution}, the leaf estimator is consistent.

\subsection{Fast Algorithm for Shapley Values with the Leaf estimator}
Here, we focus on the computational efficiency offered by the Leaf estimator. It is well-known that the computation of the Shapley Values has exponential complexity, as we need to compute $2^p$ different coalitions for each observation. However, with the Leaf estimator $\hat{f}_S^{(Leaf)}$, we can reduce the complexity to being exponential in the depth of the tree in the worst case, instead of being exponential in the total number of variables $p$. This is very interesting, as the depth of the tree is rarely above 10 in practice, while $p$ can be very large, spanning different orders of magnitude. The idea is to split the original game into the sum of smaller games, as described by the following proposition.
\begin{proposition} \label{prop:sum_gam}
Consider a tree-based model $f(\boldsymbol{x})=\sum_{m=1}^{M}f_{m}\mathds{1}_{L_{m}}(\boldsymbol{x})$, and let $S_m$ be the set of variables used along the path to leaf $L_m$. For any observation $\x$ and variable $X_i=x_i$, we can decompose its Shapley value $\phi_{X_i}(f^{(Leaf)})$ into the sum of $\# C(S,\boldsymbol{x})$ cooperative games defined on each leaf $m \in C(S,\boldsymbol{x})$ as follows
 \begin{equation} \label{eq:sumGames}
 \phi_{X_i}\left(f^{(Leaf)}\right) = \sum_{m \in C(S,\boldsymbol{x})} \phi_{X_i}^m\left(f^{(Leaf)}\right)    
 \end{equation}
 where $\phi_{X_i}^m\left(f^{(Leaf)}\right)$ is a reweighted version of the Shapley Value of the cooperative game with players $S_m$ and value function $v(f^{(Leaf)},S) =P_X(L_m | \X_S \in L_m^S)$. 
\end{proposition}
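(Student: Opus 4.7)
The plan is to exploit two structural properties of the Leaf estimator in \eqref{eq:LeafPredictor}: additivity across leaves and the fact that, for each leaf $m$, the ingredients of the reduced predictor depend on a coalition $S$ only through the intersection $S\cap S_m$, where $S_m$ is the set of variables appearing on the path to $L_m$.

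First, I would rewrite
\[
\widehat{f}_S^{(Leaf)}(\xs) \;=\; \sum_{m=1}^{M} f_m\, g_m(S,\boldsymbol{x}), \qquad g_m(S,\boldsymbol{x}) \;=\; \frac{\mathds{1}_{m\in C(S,\boldsymbol{x})}\,\widehat{P}_{X}^{(Leaf)}(L_m\mid\XS\in L_m^S)}{Z(S,\boldsymbol{x})},
\]
and invoke linearity of the Shapley operator in the value function (with $\boldsymbol{x}$ fixed) to obtain $\phi_{X_i}(f^{(Leaf)}) = \sum_{m} f_m\,\phi_{X_i}(g_m)$. This already converts the problem into a sum over leaves; what remains is to identify each $\phi_{X_i}(g_m)$ as a (reweighted) Shapley value of a sub-game whose player set is $S_m$.

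Second, I would prove a localization lemma for the numerator of $g_m$: both $\mathds{1}_{m\in C(S,\boldsymbol{x})}$ and $\widehat{P}_{X}^{(Leaf)}(L_m\mid\XS\in L_m^S)$ depend on $S$ only through $S\cap S_m$, because for every $j\notin S_m$ the interval $[a_j^m,b_j^m]$ equals $\mathbb{R}$ so that $\{X_j\in[a_j^m,b_j^m]\}$ is vacuous. Hence, in the numerator alone, any variable $i\notin S_m$ is a dummy player and its marginal contribution vanishes. For $i\in S_m$, I would carry out a standard dummy-player reduction: reorganize the Shapley sum over $S\subseteq\llbracket 1,p\rrbracket\setminus\{i\}$ by the pair $(T,R)$ with $T=S\cap S_m$ and $R=S\setminus S_m$, then collapse the sum over $R\subseteq\llbracket 1,p\rrbracket\setminus S_m$ using a Vandermonde-type identity on binomial coefficients. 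This yields
\[
\phi_{X_i}(g_m) \;=\; \sum_{T\subseteq S_m\setminus\{i\}} w(|T|,|S_m|,p)\,\bigl[v_m(T\cup\{i\})-v_m(T)\bigr],
\]
where $v_m(T)=P_X(L_m\mid\boldsymbol{X}_T\in L_m^T)$ is exactly the value function of the game announced in the statement, and $w(|T|,|S_m|,p)$ is the combinatorial coefficient produced by embedding an $|S_m|$-player game into a $p$-player game — the source of the word ``reweighted'' in the proposition.

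The main obstacle is the normalization constant $Z(S,\boldsymbol{x})$: since $Z$ sums over all leaves, its $S$-dependence is not localized to $S\cap S_m$, which breaks the clean dummy-player argument applied to the numerator. I would handle this in two stages: first establish the decomposition for the unnormalized predictor $\sum_m f_m \widehat{P}_X^{(Leaf)}(L_m\mid\XS\in L_m^S)\mathds{1}_{m\in C(S,\boldsymbol{x})}$, where the combinatorial step goes through cleanly; then absorb the factor $1/Z(S,\boldsymbol{x})$ into the per-leaf reweighting coefficient, so that the leaf game on $S_m$ with value function $v_m$ receives a data-dependent renormalization that accounts for $Z$. Finally, since leaves $m\notin C(S,\boldsymbol{x})$ contribute zero to every $\phi_{X_i}(g_m)$, restricting the outer sum to $m\in C(S,\boldsymbol{x})$ gives precisely \eqref{eq:sumGames}. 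The Vandermonde identity determining $w$ and the compatibility of the reweighting with the normalization are the technical crux; everything else follows from linearity of the Shapley operator and the dummy-player axiom.
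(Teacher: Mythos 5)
Your core argument coincides with the paper's own proof: substitute the leaf-based reduced predictor into the Shapley sum, use the localization property that for a fixed leaf $m$ both $\mathds{1}_{m\in C(S,\boldsymbol{x})}$ and $P_X(L_m\mid\XS\in L_m^S)$ depend on $S$ only through $S\cap S_m$ (the paper records this as equation \eqref{eq:simplification}, since variables off the path to $L_m$ impose no constraint), reindex each coalition as the pair $(S\cap S_m,\,S\setminus S_m)$, and collapse the sum over the off-path part into the weight $\binom{p-1}{|S'|}^{-1}+\sum_{k=1}^{p-|S_m|}\binom{p-|S_m|}{k}\binom{p-1}{k+|S'|}^{-1}$. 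That last step is plain counting of the off-path subsets by size rather than a Vandermonde identity, but this is cosmetic; up to here your proposal and the paper's derivation are the same.

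The one genuine issue is your treatment of the normalizer $Z(S,\boldsymbol{x})$. You rightly flag it as the obstacle, but the proposed fix --- ``absorb the factor $1/Z(S,\boldsymbol{x})$ into the per-leaf reweighting coefficient'' --- does not go through as stated: $Z(S,\boldsymbol{x})=\sum_{m'\in C(S,\boldsymbol{x})} N(L_{m'})/N(L_{m'}^S)$ depends on the full coalition $S$ through every compatible leaf, so for a fixed $m$ the marginal contributions of your $g_m$ genuinely vary with $S\setminus S_m$. They cannot be folded into a coefficient depending only on $(|T|,|S_m|,p)$, nor into any fixed per-leaf renormalization, without giving up the claim that the leaf game has player set $S_m$ and value function $v_m(T)=P_X(L_m\mid\boldsymbol{X}_T\in L_m^T)$. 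You should know, however, that the paper's own proof simply omits $Z$: it substitutes $f_S^{(Leaf)}(\xs)=\sum_{m\in C(S,\boldsymbol{x})} f_m P(L_m\mid\XS\in L_m^S)$ without the normalization, and the Multi-Games algorithm likewise uses the unnormalized ratios $N(L_m)/N(L_m^S)$. So for the unnormalized leaf estimator your argument is complete and essentially identical to the paper's; for the normalized estimator of equation \eqref{eq:LeafPredictor}, the decomposition in the stated form does not follow from your (or the paper's) argument, and your second stage would need either to show that $Z$ can be ignored or to modify the value function of the per-leaf games accordingly.
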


Therefore, to compute the Shapley Values, we propose calculating them leaf by leaf using equation \ref{eq:sumGames}. In this approach, the computation of the Shapley Values for the $p$ variables is performed by summing over $\# C(S,\boldsymbol{x})$ cooperative games (leaves), each having a number of variables $\#S_m, m \in C(S,\boldsymbol{x})$ lower than or equal to $D$, which is the maximum depth of the tree. As a result, the computational complexity is $\mathcal{O}(p \times \# C(S,\boldsymbol{x}) \times 2^{D})$ in the worst cases. To improve the computational efficiency, we introduce the \emph{Multi-Games algorithm} which leverages on Proposition \ref{prop:sum_gam}. This algorithm is linear in the number of observations and can handle a large number of variables. However, its complexity is still higher than that of TreeSHAP \citep{lundberg2020local2global}, which is polynomial with $\mathcal{O}(M \times \texttt{D}^2)$. The algorithm is described below, and we use the notations $N(L_m^{\emptyset}) = \sum_{m=1}^{M} N(L_m)$ and $\mathds{1}_{L_m^{\emptyset}}(\boldsymbol{x}_{\emptyset}) = 1$.

\begin{algorithm}[ht] \label{alg:fast_sv}
\SetAlgoLined
 \textbf{Inputs:}  $\boldsymbol{x}, f(\boldsymbol{x}) = \sum_{m=0}^M f_m \mathds{1}_{L_m}(\boldsymbol{x})$\;
 $p = length(\boldsymbol{x})$\;
 $\phi = \text{zeros}(p)$\;
 \For{$m \in C(S, \x)$}{
    \For{$i$  in $[p]$}{
        \If{$i$ not in $S_m$}{
             continue \tcc*[r]{skip to next variable}
        }
        \For{$S \subseteq S_m$}{
        $\phi[i] = \left(\binom{p - 1}{|S|}^{-1} + \sum_{k=1}^{p -|S_m|} \binom{p - |S_m|}{k} \binom{p - 1}{k+ |S|}^{-1} \right) \times \;
        \left( \mathds{1}_{L_m^{S \cup i}}(\boldsymbol{x}_{S\cup i})\frac{N(L_m)}{N(L_m^{S\cup i})} - \mathds{1}_{L_m^{S}}(\boldsymbol{x}_{S})\frac{N(L_m)}{N(L_m^{S})}\right)$
        }
    }
}
 \caption{\emph{Multi-Games Algorithm}}
 \Return $\phi$
\end{algorithm}

\textbf{Remark}: This algorithm is highly parallelizable, as it can be vectorized to compute the Shapley values for multiple observations simultaneously.

\section{Comparison of the estimators}
 To compare the different estimators, we need a model where conditional expectations can be calculated exactly. If $X \sim \mathcal{N}(\mu, \Sigma)$ then $X_{\bar{S}}|X_S$ is also multivariate gaussian with explicit mean vector and covariance matrix; see Supplementary Material. Note that we do not include any comparisons with KernelSHAP as our main goal is to improve upon TreeSHAP which is the state-of-the-art for tree-based models. In addition, most implementation of KernelSHAP is based on the marginal distribution as its aims to be model-agnostic.

\textbf{Experiment 1.} In the first experiment, we consider a dataset $\mathcal{D}_n =\left\{\boldsymbol{X}_{i}, Y_{i}\right\}_{i=1}^n$ with $n=10^4$ generated by a linear regression model with $\boldsymbol{X} \in \mathbb{R}^{p}$ following a multivariate Gaussian distribution with mean vector $0$ and covariance matrix $\Sigma = \rho J_p + (\rho - 1) I_p$, where $p=5$ and $\rho=0.7$. The response variable is $Y = B^t \boldsymbol{X}$, where $B$ is a vector of coefficients. We trained a RandomForest $f$ on $\mathcal{D}_n$ and obtained an MSE of $4.28$. The parameters used for the training can be found in the Supplementary Material. Since the law of $\boldsymbol{X}$ is known, we can compute the exact Shapley Values (SV) of $f$ using a Monte Carlo estimator (MC).

We aim to compare the true Shapley Value $\phi_{X_i}(f)$ with the Shapley Value estimated by the different methods $\phi_{X_i}(\hat{f}^{(method)})$, where the $method$ can be SHAP, Leaf, or D. To quantify the differences between these estimators, we consider two evaluation metrics. Firstly, we compute the Relative Absolute Error (R-AE) as defined in equation (\ref{eq:R-AE}). Secondly, we measure the True Positive Rate (TPR) to evaluate whether the ranking of the top $k=3$ highest and lowest Shapley Values is preserved across different estimators.

In figure \ref{fig:shap_dist_error}, we compute the SV $\phi_{X_i}(\hat{f}^{SHAP})$, $\phi_{X_i}(\hat{f}^{Leaf})$ on a new dataset of size 1000 generated by the synthetic model. We observe that the estimator $\hat{f}^{Leaf}$ is more accurate than TreeSHAP $\hat{f}^{SHAP}$ by a large margin. TreeSHAP has an average R-AE$=3.31$ and TPR$=86\% (\pm 17\%)$ while Leaf estimator gets R-AE$=0.90$ and TPR$=94\% (\pm12\%)$.

\begin{figure}[ht]
    \centering
    \includegraphics[width=0.7\linewidth]{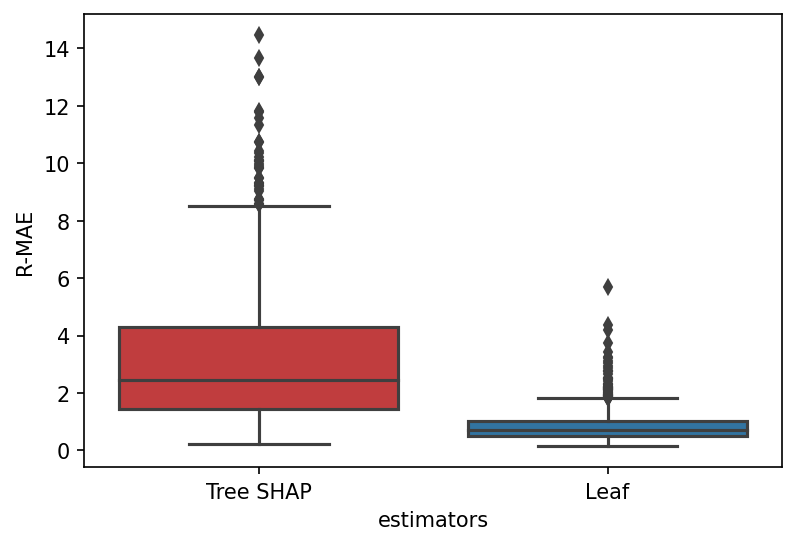}
    \caption{R-AE on 1000 new observations sampled from the synthetic model, p=5.}
    \label{fig:shap_dist_error}
\end{figure}

\begin{figure}[ht]
    \centering
    \includegraphics[width=0.7\linewidth]{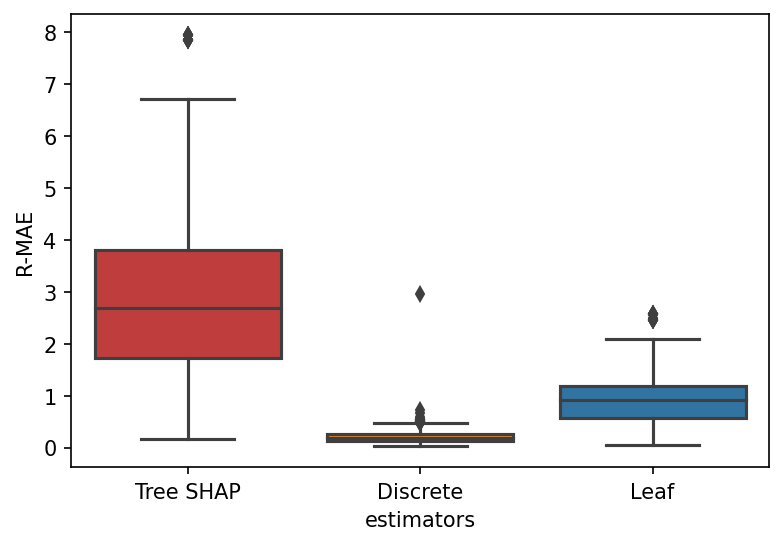}
    \caption{R-AE on 1000 new observations sampled from the synthetic model with discreted variables, p=5}
    \label{fig:discrete_comparisons}
\end{figure}

In figure \ref{fig:shap_dist_error}, we compare the SV of the Discrete unbiased estimator $\phi_{X_i}\left(\hat{f}^{(D)}\right)$, TreeSHAP $\hat{\phi_{X_i}^{SHAP}}\left(f\right)$ and Leaf estimator $\phi_{X_i}\left(\hat{f}^{(Leaf)}\right)$ with the True $\phi_{X_i}(f)$, where f was trained on the discretized version of $\mathcal{D}_n$. As demonstrated in figure \ref{fig:discrete_comparisons}, the Discrete estimator also outperforms TreeSHAP by a significant margin. 

\textbf{Experiment 2.} Here, we investigate the impact of feature dependence on the performance of the different estimators. We utilize the toy model from Experiment 1, but we vary the correlation coefficient $\rho$ from 0 to 0.99, representing increasing positive correlations among the features. As shown in Figure \ref{fig:variation}, TreeSHAP performs well when the features are independent ($\rho=0$), but it is outperformed by Leaf as the dependence between the features increases.

\begin{figure}[ht]
    \centering
    \includegraphics[width=0.7\linewidth]{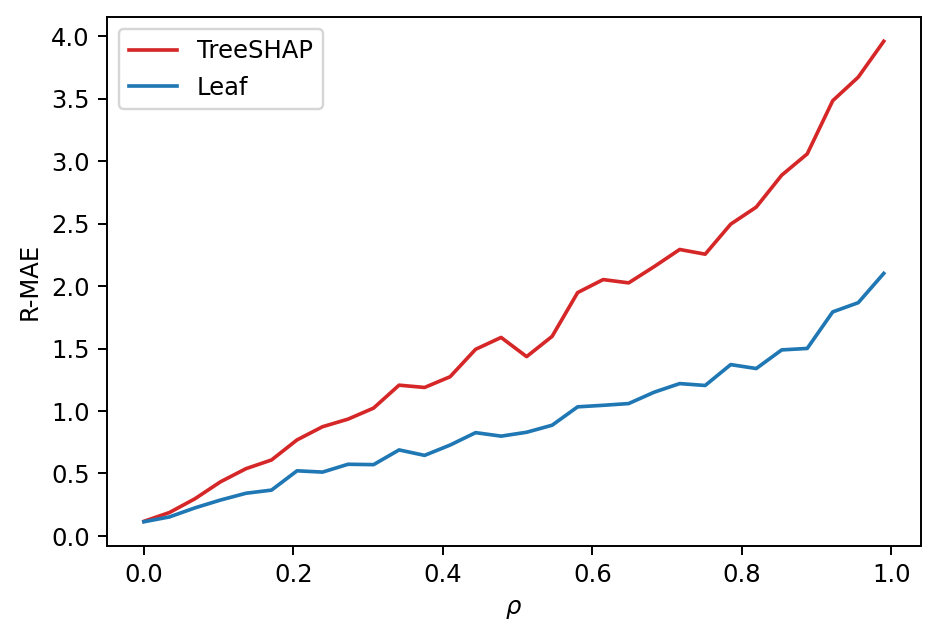}
    \caption{R-AE of the different estimators given the correlation coefficient $\rho \in [0, 0.99]$}
    \label{fig:variation}
\end{figure}
Furthermore, we conduct a runtime comparison of computing SV with Leaf and TreeSHAP on three datasets with different shapes: Boston $(n=506, p=13)$, Adults $(n=32561, p=12)$, and a toy linear model $(n=10000, p=500)$, where $n$ is the number of observations and p is the number of variables. We trained XGBoost with default parameters on these datasets and computed the SV of $1000$ observations for Adults, the toy model, and $506$ observations for Boston. As expected, as shown in Table \ref{tab:runtime}, TreeSHAP is much faster than the Leaf estimator. This difference in runtime can be partly explained by the Leaf estimator having to go through all the data for each leaf, whereas TreeSHAP uses the information stored in the trees. However, the Leaf estimator is not very affected by the dimension of the variables, as it succeeds in computing the SV when $p=500$ in a reasonable time.

\begin{table}[ht]
\caption{Run-time of TreeSHAP and Leaf estimator on Adults (A), Boston (B) and the toy (T) datasets.}
\vskip 0.1in
\begin{center}
\begin{small}
\begin{sc}
\begin{tabular}{lcccr}
\toprule
DataSETS & Leaf & Tree SHAP \\
\midrule
A (p=12)    & 1 min 4 s ± 1.73 s & 3.33 s ± 39.9 ms \\
B (p=13) & 8.82 s ± 204 ms & 129 ms  ± 6.91 ms \\
t (p=500) & 1min 5s ± 1.73 s  & 101 ms ± 4.54 ms\\
\bottomrule
\end{tabular}
\end{sc}
\end{small}
\end{center}
\label{tab:runtime}
\vskip -0.1in
\end{table}

\section{Discussion and Future works}

We have demonstrated that the current implementation of Shapley Values can lead to unreliable explanations due to biased estimators or inappropriate handling of categorical variables. To address these issues, we have proposed new estimators and provided a correct method for handling categorical variables. Our results show that even in simple models, the difference between the state-of-the-art (TreeSHAP) and proposed methods can be significant. Despite the growing interest in trustworthy AI, the impact of these inaccuracies in explanations is not well understood. One reason for this may be the difficulty in systematically and quantitatively evaluating the quality of an explanation, as it depends on the law of the data, which can be difficult to approximate. Furthermore, such analyses may be influenced by confirmation bias.

We also believe that the quality of estimates is not the only drawback of SV. In fact, we demonstrate in Proposition \ref{chap3:prop:reparam} that SV explanations are not local explanations, but remain global, even in simple linear models.
\begin{proposition}\label{chap3:prop:reparam}
Let us assume that we have $X \in \mathbb{R}^p$, $\boldsymbol{X} \in \mathcal{N}(0, I_p)$ independent Gaussian features, and a linear predictor $f$ defined as: 
\begin{align*}
 f(X) = (a_1 X_1 + a_2 X_2) \mathds{1}_{X_{5} \leq 0} + (a_3 X_3 + a_4 X_4) \mathds{1}_{X_{5} > 0}.
\end{align*}Even if we choose an observation $\boldsymbol{x}$ such that $x_5 \leq 0$  and the predictor only uses $X_1, X_2$, the SV of $\phi_{X_3}, \phi_{X_4}$ is not necessarily zero. Indeed, $\forall i \in \{3, 4\}$
{\small \begin{align*} 
  \phi_{X_i} &  = \frac{1}{p} P(\boldsymbol{X}_5 > 0) \sum_{S \subseteq [p]\setminus\{i, 5\}} \binom{p-1}{|S|}^{-1} \Big(a_i(\boldsymbol{x}_i -  E[\boldsymbol{X}_i])\Big)\\
        &  = K\Big(a_i(\boldsymbol{x}_i -  E[\boldsymbol{X}_i])\Big) \quad K\; \text{a constant}
\end{align*}}
\end{proposition}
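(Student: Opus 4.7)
The plan is to work directly from the single-variable Shapley formula
\[
\phi_{X_i}(f) = \frac{1}{p} \sum_{S \subseteq [p] \setminus \{i\}} \binom{p-1}{|S|}^{-1} \bigl[f_{S \cup \{i\}}(\boldsymbol{x}_{S \cup \{i\}}) - f_S(\boldsymbol{x}_S)\bigr],
\]
specialized to $i = 3$ (the case $i = 4$ is identical by symmetry), and to exploit the independence of the Gaussian coordinates to evaluate each reduced predictor in closed form. Under $\boldsymbol{X} \sim \mathcal{N}(0, I_p)$, conditioning on $\boldsymbol{X}_S = \boldsymbol{x}_S$ leaves $\boldsymbol{X}_{\bar{S}}$ distributed as its unconditional marginal, so each linear piece $a_1 X_1 + a_2 X_2$ and $a_3 X_3 + a_4 X_4$ inside $f$ evaluates to $a_j x_j$ when $j \in S$ and to $E[X_j] = 0$ otherwise, while each indicator $\mathds{1}_{X_5 \leq 0}$ or $\mathds{1}_{X_5 > 0}$ evaluates either to its observed value (when $5 \in S$) or to the corresponding probability $P(X_5 \leq 0) = P(X_5 > 0) = 1/2$ (when $5 \notin S$).

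The main step is a case split on whether the routing coordinate $5$ belongs to $S$. When $5 \in S$, the assumption $x_5 \leq 0$ forces $\mathds{1}_{X_5 \leq 0} = 1$ and $\mathds{1}_{X_5 > 0} = 0$, so $f_S$ and $f_{S \cup \{3\}}$ both reduce to expressions involving only $x_1$ and $x_2$; since $3 \notin \{1, 2\}$, the marginal contribution $f_{S \cup \{3\}} - f_S$ vanishes identically for every such $S$. When $5 \notin S$ (hence also $5 \notin S \cup \{3\}$), both branches survive with indicator weights $P(X_5 \leq 0)$ and $P(X_5 > 0)$, and a direct computation shows that the linear terms in $X_1, X_2, X_4$ cancel between $f_{S \cup \{3\}}$ and $f_S$, leaving
\[
f_{S \cup \{3\}}(\boldsymbol{x}_{S \cup \{3\}}) - f_S(\boldsymbol{x}_S) = P(X_5 > 0)\, a_3 \bigl(x_3 - E[X_3]\bigr).
\]

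Finally, summing over the surviving subsets $S \subseteq [p]\setminus\{3, 5\}$ with weights $\binom{p-1}{|S|}^{-1}$ factors the $x$-dependence outside the sum and yields the stated expression with
\[
K = \frac{1}{p}\, P(X_5 > 0) \sum_{S \subseteq [p]\setminus\{3, 5\}} \binom{p-1}{|S|}^{-1},
\]
which is a constant independent of $\boldsymbol{x}$. I do not anticipate any deep obstacle here; the only delicate point is the bookkeeping in the $5 \in S$ case, where one must carefully use both the assumption $x_5 \leq 0$ and the fact that the surviving branch depends only on $X_1, X_2$ to conclude that the marginal contribution vanishes. The remainder is a routine evaluation of Gaussian expectations under independence, and the same reasoning with the roles of branches swapped handles $i = 4$.
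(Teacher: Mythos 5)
Your proposal is correct and follows essentially the same route as the paper's own proof: split the Shapley sum according to whether the routing variable $X_5$ lies in $S$, show the $5 \in S$ terms vanish because conditioning on $x_5 \leq 0$ kills the second branch and independence makes adding $X_3$ (or $X_4$) irrelevant, and evaluate the $5 \notin S$ terms via independence to get the marginal contribution $P(X_5>0)\,a_i(x_i - E[X_i])$, which factors out of the weighted sum to give the constant $K$.
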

The proof is in the Supplementary Material. Proposition \ref{chap3:prop:reparam} highlights that the SV are not a truly local measures, but rather have a global effect.  This occurs because when calculating the SV for $X_1$ or $X_2$, we also consider subsets $S$ that do not contain $X_5$. By marginalizing and changing the sign of $X_5$, we use the other linear model not used for the given observation. Such findings pose significant challenges in the interpretation of SV, and we believe that they are often overlooked due to the lack of precision and understanding of Shapley Values in practice.


\bibliography{References}


\clearpage
\appendix

\thispagestyle{empty}

\onecolumn \makesupplementtitle
\tableofcontents
\addtocontents{toc}{\protect\setcounter{tocdepth}{2}}

\newpage
\section{Proof of Proposition \ref{prop:sum_gam}: Decomposition of the original game into the sum of smaller game}

\begin{proposition}
Consider a tree-based model $f(\boldsymbol{x})=\sum_{m=1}^{M}f_{m}\mathds{1}_{L_{m}}(\boldsymbol{x})$, and let $S_m$ be the set of variables used along the path to leaf $L_m$. For any observation $\x$ and variable $X_i=x_i$, we can decompose its Shapley value $\phi_{X_i}(f^{(Leaf)})$ into the sum of $\# C(S,\boldsymbol{x})$ cooperative games defined on each leaf $m \in C(S,\boldsymbol{x})$ as follows
 \begin{equation} \label{eq:sumGames_sup}
 \phi_{X_i}\left(f^{(Leaf)}\right) = \sum_{m \in C(S,\boldsymbol{x})} \phi_{X_i}^m\left(f^{(Leaf)}\right)    
 \end{equation}
 where $\phi_{X_i}^m\left(f^{(Leaf)}\right)$ is a reweighted version of the Shapley Value of a cooperative game with players $S_m$ and value function $v(f^{(Leaf)},S) =P_X(L_m | \X_S \in L_m^S)$. 
\end{proposition}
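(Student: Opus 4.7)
The plan is to leverage the linearity of the Shapley map in its value function together with the dummy-player property, once one recognizes that the leaf-based reduced predictor splits naturally as a sum over leaves whose summands depend only on the variables appearing along the corresponding decision path.

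First, I would write $\widehat{f}^{(Leaf)}_S(\xs) = \sum_{m=1}^{M} v_m(S)$ with
$$v_m(S) \triangleq f_m \, \mathds{1}_{\xs \in L_m^S} \cdot \frac{N(L_m)}{N(L_m^S)},$$
temporarily setting aside the normalization $Z(S,\boldsymbol{x})$ and treating the predictor in the unnormalized form used implicitly by the Multi-Games algorithm. The structural observation is that $L_m^S = \prod_{i\in S}[a_i^m,b_i^m]$ places no constraint on coordinates $i\notin S_m$ (for such $i$ the interval is $\mathbb{R}$), so both $\mathds{1}_{\xs \in L_m^S}$ and $N(L_m^S)$ depend on $S$ only through $T \triangleq S \cap S_m$; hence $v_m(S) = v_m(S \cap S_m)$.

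Next, by additivity of the Shapley map in the value function,
$$\phi_{X_i}\bigl(f^{(Leaf)}\bigr) = \sum_{m=1}^{M} \phi_{X_i}(v_m).$$
For any $X_i \notin S_m$ the variable is a dummy player in the game $v_m$, so $\phi_{X_i}(v_m)=0$, and the sum reduces to leaves whose path uses $X_i$, a subset of $C(S,\boldsymbol{x})$. For the remaining leaves, I would plug in the Shapley formula and split $S \subseteq [p]\setminus\{i\}$ uniquely as $S = T \sqcup R$ with $T \subseteq S_m \setminus\{i\}$ and $R \subseteq [p]\setminus S_m$. Since the marginal contribution $v_m(S\cup\{i\}) - v_m(S)$ equals $v_m(T\cup\{i\}) - v_m(T)$, the outer sum factorizes and the inner sum over $R$ collapses to a combinatorial weight
$$w(|T|) = \sum_{k=0}^{p - |S_m|} \binom{p - |S_m|}{k} \frac{(|T|+k)!\,(p-|T|-k-1)!}{p!},$$
which is precisely the coefficient appearing in Algorithm \ref{alg:fast_sv}. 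Thus $\phi_{X_i}(v_m)$ is a Shapley-type sum on the reduced player set $S_m$ with value function $T\mapsto f_m \mathds{1}_{\xs\in L_m^T} N(L_m)/N(L_m^T)$ (proportional to $P_X(L_m\,|\,\XS\in L_m^S)$), but with the reweighted coefficients $w(|T|)$ in place of the standard Shapley weights, which is exactly the object denoted $\phi_{X_i}^m(f^{(Leaf)})$ in the proposition.

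The main obstacle is the normalization $Z(S,\boldsymbol{x}) = \sum_{m \in C(S,\boldsymbol{x})} N(L_m)/N(L_m^S)$, which depends on $S$ nonlinearly and breaks the per-leaf additivity at face value. I would sidestep this by working with the unnormalized leaf predictor (the quantity the Multi-Games algorithm manipulates), so that the decomposition $\sum_m v_m$ is exact and additivity applies directly; alternatively one can absorb $Z(S,\boldsymbol{x})$ into each $v_m$ and track it through the combinatorial reweighting, at the cost of a more cumbersome expression for $\phi_{X_i}^m$. Apart from this bookkeeping issue, the argument is a clean application of Shapley linearity plus the dummy-player property, with the reweighting step being the one genuinely non-trivial calculation.
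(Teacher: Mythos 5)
Your proof is correct and follows essentially the same route as the paper's: the key identity that each leaf game depends on $S$ only through $S \cap S_m$ (the paper's simplification $P_X(L_m \mid \boldsymbol{X}_{Z\cup S} \in L_m^{Z\cup S}) = P_X(L_m \mid \boldsymbol{X}_S \in L_m^S)$ for $Z \subseteq \bar{S}_m$), followed by collapsing the sum over off-path subsets into the reweighted coefficient, is exactly the paper's argument, with your linearity-plus-dummy-player phrasing being just a cleaner packaging of its sum rearrangement. You are in fact more careful than the paper on one point: the published proof also silently drops the normalizing constant $Z(S,\boldsymbol{x})$ and works with the unnormalized leaf predictor, which you flag explicitly and handle in the same way.
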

\begin{proof}
 By definition, we have for the variable $X_i$
 \begin{align*}
     \phi_{X_i}(f^{(Leaf)}) & = \frac{1}{p}\sum_{S \subseteq [p]\setminus\{i\}} \binom{p-1}{|S|}^{-1} \bigg( f^{(Leaf)}_{S\cup i}(\boldsymbol{x}_{S\cup i}) -f^{(Leaf)}_{S}(\boldsymbol{x}_{S}) \bigg) \\
     &  = \frac{1}{p}\sum_{S \subseteq [p]\setminus\{i\}} \binom{p-1}{|S|}^{-1}  \bigg( \sum_{m \in C(S,\boldsymbol{x})} f_m \Big[ P(L_m | \boldsymbol{X}_{S\cup i} \in L_m^{S\cup i}) - P(L_m | \boldsymbol{X}_{S} \in L_m^S\Big] \bigg) \\
     & = \frac{1}{p} \sum_{m \in C(S,\boldsymbol{x})} \sum_{S' \subseteq S_m\setminus \{i\}} \textcolor{red}{\bigg[} \binom{p-1}{|S'|}^{-1} f_m \Big[ P(L_m | \boldsymbol{X}_{S'\cup i} \in L_m^{S'\cup i})   - P(L_m | \boldsymbol{X}_{S'} \in L_m^{S'}) \Big] \\ 
     & + \sum_{Z\neq \emptyset, Z \subseteq\,  \overline{S_m\cup i} } \binom{p-1}{|Z| + |S'|}^{-1} f_m  \Big[P(L_m | \boldsymbol{X}_{S'\cup Z\cup i} \in L_m^{S'\cup Z\cup i}) - P(L_m |\boldsymbol{X}_{S'\cup Z} \in  L_m^{S_m\cup Z}) \Big] \textcolor{red}{\bigg]}\\
 \end{align*}
If $Z \subseteq \Bar{S}_m$ and  $S \subseteq S_m$, we have
  \begin{equation}
     P_X(L_m | \boldsymbol{X}_{Z \cup S} \in L_m^{Z\cup S}) = P_X(L_m | \boldsymbol{X}_{S} \in L_m^S). \label{eq:simplification}
 \end{equation}
 Therefore, $\phi_{X_i}$ can be rewrite as:
 \begin{align*}
     & \phi_{X_i}(f^{(Leaf)}) \\
     &  =   \frac{1}{p} \sum_{m \in C(S,\boldsymbol{x})} \sum_{S' \subseteq S_m\setminus \{i\}} \bigg[ \binom{p-1}{|S'|}^{-1} + \sum_{Z\neq \emptyset, Z \subseteq\,  \overline{S_m\cup i} } \binom{p-1}{|Z| + |S'|}^{-1}\bigg] \\
     & \times f_m  \Big[ P(L_m | \boldsymbol{X}_{S'\cup i} \in L_m^{S'\cup i}) - p(L_m | \boldsymbol{X}_{S'} \in L_m^{S'}) \Big] \\
     &  \triangleq \sum_{m \in C(S,\boldsymbol{x})} \phi_{X_i}^m(\boldsymbol{x})
 \end{align*}
\end{proof}  
\section{Proof of SV invariance for transformed continuous variables}
\begin{proposition}\label{prop:reparam_sup}
Let $f$ and $\Tilde{f} = f \circ \boldsymbol{\varphi}^{(-1)} $ its reparametrization, then we have $\forall i \in \left\llbracket 1,p\right\rrbracket$, and $\boldsymbol{U} = \boldsymbol{\varphi}(\X)$: 
\begin{align*}
     \phi_{X_i}(f) = \phi_{U_i}(\tilde{f}). \label{eq:equivariance} 
\end{align*}
\end{proposition}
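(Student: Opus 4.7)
The plan is to reduce the invariance of the Shapley Value to the invariance of each reduced predictor $f_S$ under the reparametrization $\boldsymbol{\varphi}$. Since formula \ref{eq:classic_shapley_game} is a weighted sum in which the combinatorial weights depend only on $p$ and $|S|$, once I establish that $\tilde{f}_S(\boldsymbol{u}_S)=f_S(\boldsymbol{x}_S)$ for every $S\subseteq\left\llbracket 1,p\right\rrbracket$ and every $\boldsymbol{u}=\boldsymbol{\varphi}(\boldsymbol{x})$, the identity $\phi_{U_i}(\tilde{f})=\phi_{X_i}(f)$ drops out term by term.

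The key step is this equality of reduced predictors. Unwinding the definition,
\begin{align*}
\tilde{f}_S(\boldsymbol{u}_S)
 &= E\bigl[\tilde{f}(\boldsymbol{U})\,\bigm|\,\boldsymbol{U}_S=\boldsymbol{u}_S\bigr] \\
 &= E\bigl[f(\boldsymbol{\varphi}^{-1}(\boldsymbol{U}))\,\bigm|\,\boldsymbol{U}_S=\boldsymbol{u}_S\bigr].
\end{align*}
Because $\boldsymbol{\varphi}$ acts componentwise and each $\varphi_i$ is a diffeomorphism (hence a bijection), the event $\{U_i=u_i\}$ coincides with $\{X_i=\varphi_i^{-1}(u_i)\}=\{X_i=x_i\}$ for every $i\in S$, while $\boldsymbol{\varphi}^{-1}(\boldsymbol{U})=\boldsymbol{X}$ almost surely. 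So conditioning on $\boldsymbol{U}_S=\boldsymbol{u}_S$ is the same event as conditioning on $\boldsymbol{X}_S=\boldsymbol{x}_S$, and the right-hand side becomes $E[f(\boldsymbol{X})\mid \boldsymbol{X}_S=\boldsymbol{x}_S]=f_S(\boldsymbol{x}_S)$.

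With that identity available, I would substitute directly into the definition and read off
\begin{align*}
\phi_{U_i}(\tilde{f})
 &= \frac{1}{p}\sum_{S\subseteq\llbracket 1,p\rrbracket\setminus\{i\}} \binom{p-1}{|S|}^{-1}\!\bigl[\tilde{f}_{S\cup\{i\}}(\boldsymbol{u}_{S\cup\{i\}})-\tilde{f}_S(\boldsymbol{u}_S)\bigr] \\
 &= \frac{1}{p}\sum_{S\subseteq\llbracket 1,p\rrbracket\setminus\{i\}} \binom{p-1}{|S|}^{-1}\!\bigl[f_{S\cup\{i\}}(\boldsymbol{x}_{S\cup\{i\}})-f_S(\boldsymbol{x}_S)\bigr]
 = \phi_{X_i}(f),
\end{align*}
which closes the argument. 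The same derivation goes through verbatim for the coalition version \ref{eq:classic_shapley_game}.

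The main obstacle, and really the only point worth spelling out with care, is measure-theoretic. When some $X_i$ is continuously distributed, the equality $\{U_i=u_i\}=\{X_i=x_i\}$ has to be understood via regular conditional distributions, so one must check that the change of variables $\boldsymbol{U}=\boldsymbol{\varphi}(\boldsymbol{X})$ transports regular conditionals correctly. Since $\boldsymbol{\varphi}$ is a measurable bijection with measurable inverse, acting diagonally on the coordinates, this is standard: one verifies it on product-measurable rectangles or invokes the disintegration theorem. Once this technicality is handled, the proof is essentially a single change of variables inside the conditional expectation.
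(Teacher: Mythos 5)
Your proof is correct, and it shares the paper's overall skeleton: both arguments reduce the claim to the identity $\tilde{f}_S(\boldsymbol{u}_S)=f_S(\boldsymbol{x}_S)$ for every coalition $S$, after which the equality of Shapley Values follows term by term from the definition, since the combinatorial weights depend only on $p$ and $|S|$. Where you diverge is in how this key identity is established. The paper works with densities: it writes the joint density of $\boldsymbol{U}$ via the change-of-variables formula with the Jacobian factors $\prod_i \lvert J(\varphi_i^{(-1)})(u_i)\rvert$, forms the conditional density $\tilde{g}(\boldsymbol{u}_{\bar S}\mid \boldsymbol{u}_S)$, and verifies by a direct substitution in the integral that $E[\tilde f(\boldsymbol{U})\mid \boldsymbol{U}_S=\boldsymbol{u}_S]=E[f(\boldsymbol{X})\mid \boldsymbol{X}_S=\boldsymbol{x}_S]$. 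You instead exploit that each $\varphi_i$ is a bi-measurable bijection acting componentwise, so that $\sigma(\boldsymbol{U}_S)=\sigma(\boldsymbol{X}_S)$ and the regular conditional distribution of $\boldsymbol{X}_{\bar S}$ given $\boldsymbol{U}_S=\boldsymbol{u}_S$ coincides with that given $\boldsymbol{X}_S=\varphi_S^{-1}(\boldsymbol{u}_S)$. Your route buys generality: it does not require the existence of a joint density (only that $\boldsymbol{\varphi}$ be a measurable bijection with measurable inverse, not even smoothness), and it makes transparent that the invariance is really a statement about conditioning information rather than about Jacobians. The paper's route is more computational and self-contained for readers comfortable with densities, and it sidesteps the regular-conditional-distribution technicality that you rightly flag; to make your version fully rigorous you would state the sigma-algebra identity $\sigma(\boldsymbol{U}_S)=\sigma(\boldsymbol{X}_S)$ explicitly and note that the two factorizations of the conditional expectation through $\boldsymbol{u}_S$ and $\boldsymbol{x}_S$ are intertwined by $\varphi_S$, rather than speaking of equality of the null events $\{U_i=u_i\}$ and $\{X_i=x_i\}$ themselves. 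With that phrasing tightened, your argument is complete and in fact slightly stronger than the one in the paper.
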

\begin{proof}
It is a direct application of the change of variables formula. If $g(\x)$ is the joint density of $X_1,\dots,X_p$, the transformed variable  $\boldsymbol{U} = (\varphi_1(X_1),\dots, \varphi_p(X_p))$ has density $\Tilde{g}(\boldsymbol{u}) = g\circ\boldsymbol{\varphi}^{(-1)}(\boldsymbol{u}) \times \prod_i \vert J(\varphi_i^{(-1)})(u_i)\vert $. We have \[\Tilde{g}(u_{\bar{S}} \vert u_{S}) = \frac{\Tilde{g}(u_{\bar{S}} ,u_{S})}{\Tilde{g}_S(u_{S})}= g \left(\varphi_{\bar{S}}^{(-1)}(u_{\bar{S}})\vert \varphi_S^{(-1)}(u_S)\right) \times \prod_{i\in \bar{S}}  \vert J(\varphi_i^{(-1)})(u_i)\vert. \]

The computation of the reduced predictor is straightforward
\begin{align*}
E\left[f(\boldsymbol{X})\vert \XS = \boldsymbol{x}_{S}\right] & =\int f(\boldsymbol{x}_{S},\boldsymbol{x}_{\bar{S}})g(\boldsymbol{x}_{\bar{S}}\vert\boldsymbol{x}_{S})d\boldsymbol{x}_{\bar{S}}\\
 & =\int f\Big(\varphi_{S}^{(-1)}\circ\varphi_{S}(\boldsymbol{x}_{S}),\varphi_{\bar{S}}^{(-1)}\circ\varphi_{\bar{S}}(\boldsymbol{x}_{\bar{S}})\Big)\:g(\boldsymbol{x}_{\bar{S}}\vert\boldsymbol{x}_{S})d\boldsymbol{x}_{\bar{S}}\\
 & =\int\tilde{f}(\boldsymbol{u}_{S},\boldsymbol{u}_{\bar{S}})g\left(\varphi_{\bar{S}}^{(-1)}(\boldsymbol{u}_{\bar{S}})\vert\varphi_{S}^{(-1)}(\boldsymbol{u}_{S})\right)\prod_{i\in\bar{S}}\left|J\varphi^{(-1)}(u_{i})\right| d\boldsymbol{u}_{\bar{S}}\\
 & =E\left[\tilde{f}(\boldsymbol{U}_{S},\boldsymbol{U}_{\bar{S}})\vert\boldsymbol{U}_{S}=\boldsymbol{u}_{S}\right].
\end{align*}

The equality of Shapley Values directly results from the equality of reduced predictors.

\end{proof}

\section{Proof of SV invariance for encoded categorical variables}
\begin{proposition}\label{prop:SV_CoalQualitative_sup}
 Given a predictor  $f:\mathbb{R}\times\left\{ 1,\dots,K\right\} \longrightarrow\mathbb{R}$ and its reparametrization $\tilde{f}$ using Dummy Encoding  $\tilde{f}:\mathbb{R}\times\left\{ 0,1\right\}^{K-1} \longrightarrow\mathbb{R}$ such that $f(X,Z)\triangleq\tilde{f}(X,Z_{1},\dots,Z_{K-1})$, we have
 \begin{equation}
\left\{ \begin{array}{lll}
 \phi_{Z_{1:K-1}}(\tilde{f}) & = & \phi_{Z}(f)\\
 \phi_{X}(\tilde{f}; Z_{1:K-1}) & = & \phi_{X}(f),
\end{array}\right.\label{eq:Shapley_CoalitionDummies}
\end{equation}
\end{proposition}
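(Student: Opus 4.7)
The plan is to reduce the Shapley value computation for $\tilde{f}$ treating $C = \{Z_1, \dots, Z_{K-1}\}$ as a single coalition to an ordinary two-player Shapley value computation on $f(X,Z)$. With this coalition in place, the only subsets $S \subseteq \{X, Z_1, \dots, Z_{K-1}\} \setminus C$ entering formula \ref{eq:classic_shapley_game} are $S = \emptyset$ and $S = \{X\}$, so after unraveling the definition both $\phi_{Z_{1:K-1}}(\tilde{f})$ and $\phi_X(\tilde{f}; Z_{1:K-1})$ are averages of exactly two "marginal contribution" terms, matching in structure the two-player formulas for $\phi_Z(f)$ and $\phi_X(f)$.

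First I would exploit the fact that $\varphi : z \mapsto (z_1, \dots, z_{K-1})$ is a bijection between $\{1, \dots, K\}$ and the support of $\tilde{P}$ in $\{0,1\}^{K-1}$ (the vectors with $\sum_k z_k \leq 1$). This gives a one-to-one correspondence between $\sigma$-algebras: conditioning on $(Z_1, \dots, Z_{K-1}) = \varphi(z)$ under $\tilde{P}$ is the same as conditioning on $Z = z$ under $P$, and the conditional law of $X$ agrees in both cases. Combined with the defining identity $\tilde{f}(X, Z_1, \dots, Z_{K-1}) = f(X,Z)$ $\tilde{P}$-a.s., and the fact that conditional expectations with respect to $\tilde{P}$ only see its support (so the zero extension of $\tilde{f}$ off the image of $\varphi$ is irrelevant), I would derive the four identities of reduced predictors:
\begin{align*}
\tilde{f}_\emptyset &= f_\emptyset, \\
\tilde{f}_{\{X\}}(x) &= f_{\{X\}}(x), \\
\tilde{f}_C(z_1, \dots, z_{K-1}) &= f_{\{Z\}}(z), \\
\tilde{f}_{\{X\} \cup C}(x, z_1, \dots, z_{K-1}) &= f(x, z).
\end{align*}

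Next I would substitute these into the expanded forms of $\phi_{Z_{1:K-1}}(\tilde{f})$ and $\phi_X(\tilde{f}; Z_{1:K-1})$ obtained from \ref{eq:classic_shapley_game}. Because the coalition is treated as a single player, the prefactor $1/(p - |C| + 1) = 1/2$ and the binomial weights $1/\binom{1}{k}$ for $k \in \{0,1\}$ coincide exactly with the weights in the two-player Shapley formulas $\phi_Z(f) = \tfrac{1}{2}[(f_Z(z) - f_\emptyset) + (f(x,z) - f_X(x))]$ and $\phi_X(f) = \tfrac{1}{2}[(f_X(x) - f_\emptyset) + (f(x,z) - f_Z(z))]$. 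Termwise matching via the identities of reduced predictors then yields both equalities in \ref{eq:Shapley_CoalitionDummies}.

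The main obstacle, and the only subtle point, is the measure-theoretic handling of $\tilde{f}$, which is only defined $\tilde{P}$-a.s. and is extended by zero to vectors with $\sum_k z_k > 1$. One must check that none of the conditional expectations $\tilde{f}_S$ at the observed point $(x, z_1, \dots, z_{K-1}) = (x, \varphi(z))$ integrate $\tilde{f}$ against points outside the image of $\varphi$: since all conditionings involve either $X$ alone or $(Z_1, \dots, Z_{K-1})$ fixed to a value in the image, the bijection with $Z$ guarantees this. Once this verification is in place, the proposition follows by direct substitution and no further calculation is needed.
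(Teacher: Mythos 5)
Your proposal is correct and follows essentially the same route as the paper's proof: treat $Z_{1:K-1}$ as a single player so the general coalition formula collapses to the two-player (weight $\tfrac12$) Shapley expressions, then use the bijection $\varphi$ between $\{1,\dots,K\}$ and the support of $\tilde{P}$ to identify each reduced predictor of $\tilde{f}$ with the corresponding one of $f$ and match termwise. Your explicit remark that the zero extension of $\tilde{f}$ off the image of $\varphi$ never enters the conditional expectations is the same measure-theoretic point the paper handles implicitly via the change-of-variables identities.
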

We recall the expression of the SV of the  two variables $X\in\mathbb{R}$
and $Z\in\left\{ 1,\dots,K\right\}$. The roles of variables $X, Z$ are symmetric, and the categorical or quantitative nature of the variable does not have any impact on the computation of the SV, as demonstrated below. Let's consider an observation $\x = (x, z)$, then
\begin{eqnarray}
\left\{ \begin{array}{ll}
\phi_{X}(f)= & \frac{1}{2}\left(E\left[f(X,Y)\vert X=x\right]-E\left[f\left(X,Z\right)\right]\right)+\frac{1}{2}\left(f\left(x,z\right)-E\left[f\left(X,Z\right)\vert Z=z\right]\right)\\
\phi_{Z}(f)= & \frac{1}{2}\left(E\left[f(X,Y)\vert Z=z\right]-E\left[f\left(X,Z\right)\right]\right)+\frac{1}{2}\left(f\left(x,z\right)-E\left[f\left(X,Z\right)\vert X=x\right]\right)
\end{array}\right.
\label{eq:ShapleyValue_XY_Indiv}
\end{eqnarray}

\begin{proof}
Let's consider the Dummy Encoding (DE) $\varphi:z\mapsto(z_{1},\dots,z_{K-1})$ without loss of generality, then the observation $\x = (x, z)$ is reparametrized as $\Tilde{\x} = (x, z_{1:K-1})$, and by construction of $\varphi$, $\exists !z\in \{1,\dots,K\}$ such
that $\varphi(z)=z_{1:K-1}$. By taking the coalition of $Z_{1:K-1}$ or considering them as a single variable, we have
\begin{multline}
    \phi_{Z_{1:K-1}}(\tilde{f})=\frac{1}{2}\left( E_{\tilde{P}}\left[\tilde{f}(X,Z_{1:K-1})\vert Z_{1:K-1}=z_{1:K-1}\right]-E_{\tilde{P}}\left[\tilde{f}(X,Z_{1:K-1})\vert\emptyset\right] \right) \\
    +\frac{1}{2} \left( E_{\tilde{P}}\left[\tilde{f}(X,Z_{1:K-1})\vert X=x,Z_{1:K-1}=y_{1:K-1}\right]-E_{\tilde{P}}\left[\tilde{f}(X,Z_{1:K-1})\vert X=x\right]\right).
\end{multline}

Using the fact that 
\begin{align*}
 E_{\tilde{P}}\left[\tilde{f}(X,Z_{1:K-1})\vert Z_{1:K-1}=z_{1:K-1}\right] & =\int\tilde{f}(x,z_{1:K-1})dP(x\vert z_{1:K-1})\\
 & =\int\tilde{f}(x,z_{1:K-1})\frac{dP(x,z_{1:K-1})}{P(z_{1:K-1})}\\
 & =\int\tilde{f}(x,\varphi(z))\frac{dP(x,\varphi(z))}{P(\varphi(z))}\\
 & =\int f(x,z)\frac{dP(x,z)}{P(z)} \\
 & = E\left[f(X,Z)\vert Z=z\right]
\end{align*} 
we have 
\begin{align*}
 E_{\tilde{P}}\left[\tilde{f}(X,Z_{1:K-1})\vert Z_{1:K-1}=z_{1:K-1}\right]-E_{\tilde{P}}\left[\tilde{f}(X,Z_{1:K-1})\vert\emptyset\right]
 & =E_{P}\left[\tilde{f}\left(X,\varphi\left(Z\right)\right)\vert Z=z\right]-E_{P}\left[\tilde{f}\left(X,\varphi\left(Z\right)\right)\right]\\
 & =E_{P}\left[f\left(X,Z\right)\vert Z=z\right]-E_{P}\left[f\left(X,Z\right)\right]
\end{align*}
We also have
\begin{align*}
& E_{\tilde{P}}\left[\tilde{f}(X,Z_{1:K-1})\vert X=x,Z_{1:K-1}=z_{1:K-1}\right]-E_{\tilde{P}}\left[\tilde{f}(X,Z_{1:K-1})\vert X=x\right] \\
& =\tilde{f}(x,z_{1:K-1})-E_{P}\left[\tilde{f}(X,\varphi\left(Z\right))\vert X=x\right]\\
 & =\tilde{f}(x,\varphi\left(z\right))-E_{P}\left[\tilde{f}(X,\varphi\left(Z\right))\vert X=x\right]\\
 & =f(x,z)-E_{P}\left[f(X,Z)\vert X=x\right]
\end{align*}
Consequently, we have
\begin{align*}
\phi_{Z_{1:K-1}}(\tilde{f}) & =\frac{1}{2}\left(E_{P}\left[f\left(X,Z\right)\vert Z=z\right]-E_{P}\left[f\left(X,Z\right)\right]\right)\\
 & \:+\frac{1}{2}\left(f(x,z)-E_{P}\left[f(X,z)\vert X=x\right]\right)
\end{align*}
We can recognize that we have exactly $\phi_{Z_{1:K-1}}(\tilde{f})=\phi_{Z}(f)$. We can derive similarly that $\phi_{X}(\tilde{f};Z_{1:K-1})=\phi_{X}(f)$. 
\end{proof}

\begin{proposition} If $X \sim \mathcal{N}(\mu, \Sigma)$, then $X_{\bar{S}} | X_{S} = x_{S}$ is also multivariate gaussian with mean $\mu_{\bar{S} | S}$ and covariance matrix $\Sigma_{\bar{S} | S}$ equal:
\begin{equation*}
    \mu_{\bar{S} | S} = \mu_{\bar{S}} + \Sigma_{\bar{S}, S}\Sigma_{S, S}^{-1}(x_S - \mu_S) \; \text{ and  } \Sigma_{\bar{S} | S} = \Sigma_{\bar{S} \bar{S}} - \Sigma_{\bar{S} S} \Sigma_{S S}^{-1} \Sigma_{S, \bar{S}} 
\end{equation*}
\end{proposition}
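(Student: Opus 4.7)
The plan is to avoid the brute-force route of writing out the joint density, expanding the block inverse of $\Sigma$, and completing the square in $x_{\bar S}$. Instead, I would use the classical decorrelation trick, which makes the computation transparent and cleanly separates the two claims (mean and covariance).

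First, assuming $\Sigma_{S,S}$ is invertible, I would introduce the matrix $A = \Sigma_{\bar S, S}\Sigma_{S,S}^{-1}$ and define the random vector $Y = X_{\bar S} - A X_S$. Since $(Y, X_S)$ is obtained from $X$ by a linear (affine) transformation, it is jointly Gaussian. The crucial step is then to compute
\begin{equation*}
\mathrm{Cov}(Y, X_S) = \mathrm{Cov}(X_{\bar S}, X_S) - A\, \mathrm{Cov}(X_S, X_S) = \Sigma_{\bar S, S} - \Sigma_{\bar S, S}\Sigma_{S,S}^{-1}\Sigma_{S,S} = 0,
\end{equation*}
so that $Y$ and $X_S$ are uncorrelated. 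Because $(Y, X_S)$ is jointly Gaussian, uncorrelated implies independent, which is the key leverage point of the proof.

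Next, I would read off the marginal law of $Y$ by linearity: $E[Y] = \mu_{\bar S} - A\mu_S$ and
\begin{equation*}
\mathrm{Var}(Y) = \Sigma_{\bar S, \bar S} - A\Sigma_{S,\bar S} - \Sigma_{\bar S, S}A^{\top} + A\Sigma_{S,S}A^{\top} = \Sigma_{\bar S, \bar S} - \Sigma_{\bar S, S}\Sigma_{S,S}^{-1}\Sigma_{S, \bar S}.
\end{equation*}
Then I would write $X_{\bar S} = Y + A X_S$ and use independence of $Y$ and $X_S$: conditionally on $X_S = x_S$, the vector $X_{\bar S}$ has the same distribution as $Y + A x_S$, which is Gaussian with mean $\mu_{\bar S} + A(x_S - \mu_S)$ and covariance $\Sigma_{\bar S, \bar S} - \Sigma_{\bar S, S}\Sigma_{S,S}^{-1}\Sigma_{S, \bar S}$, exactly as claimed.

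I do not anticipate a real obstacle here; the only subtlety worth flagging is that the argument tacitly assumes $\Sigma_{S,S}$ is invertible, which in practice holds because $\Sigma$ is positive definite. If one wished to allow a degenerate $\Sigma_{S,S}$, I would replace $\Sigma_{S,S}^{-1}$ by the Moore–Penrose pseudoinverse and restrict the identity to hold $P_{X_S}$-almost surely; the jointly-Gaussian-uncorrelated-implies-independent step still carries through. The only bookkeeping to be careful about is the distinction between $\Sigma_{\bar S, S}$ and $\Sigma_{S, \bar S} = \Sigma_{\bar S, S}^{\top}$ when expanding $\mathrm{Var}(Y)$.
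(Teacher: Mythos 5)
Your argument is correct: the decorrelation trick (setting $Y = X_{\bar S} - \Sigma_{\bar S,S}\Sigma_{S,S}^{-1}X_S$, checking $\mathrm{Cov}(Y,X_S)=0$, and invoking joint Gaussianity to upgrade uncorrelatedness to independence) is the standard clean proof of this classical fact, and your covariance bookkeeping checks out. The paper itself states this proposition without any proof, so there is nothing to compare against; your write-up, including the caveat about invertibility of $\Sigma_{S,S}$, is complete as it stands.
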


\section{Proof of the limitation of SV as local explanation} \label{sec:local_sv}
\begin{proposition}\label{prop:a_reparam}
Let us assume that we have $X \in \mathbb{R}^p$, $\boldsymbol{X} \in \mathcal{N}(0, I_8)$ and a linear predictor $f$ defined as: 
\begin{align}
 f(X) = (a_1 X_1 + a_2 X_2) \mathds{1}_{X_{5} \leq 0} + (a_3 X_3 + a_4 X_4) \mathds{1}_{X_{5} > 0}.
\end{align}
Even if we choose an observation $\boldsymbol{x}$ such that $x_5 \leq 0$  and the predictor only uses $X_1, X_2$, the SV of $\phi_3, \phi_4$ is not necessarily zero.
\end{proposition}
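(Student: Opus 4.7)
The strategy is to exploit two structural features of the model simultaneously: the mutual independence of the coordinates of $\boldsymbol{X}$ and the fact that $f$ splits additively into two disjoint linear pieces gated by $X_5$. Write $f(\boldsymbol{X}) = g_1(\boldsymbol{X})\mathds{1}_{X_5\le 0} + g_2(\boldsymbol{X})\mathds{1}_{X_5>0}$ with $g_1(\boldsymbol{X}) = a_1 X_1 + a_2 X_2$ and $g_2(\boldsymbol{X}) = a_3 X_3 + a_4 X_4$. Only $g_2$ depends on $X_3, X_4$, so the whole computation reduces to tracking the $g_2$ contribution in the Shapley sum of feature $X_i$ for $i\in\{3,4\}$.

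First I would compute the reduced predictor $f_S(\boldsymbol{x}_S) = E[f(\boldsymbol{X})\mid \boldsymbol{X}_S = \boldsymbol{x}_S]$ for every $S \subseteq [p]$, distinguishing two cases. If $5 \in S$, the indicator $\mathds{1}_{X_5\le 0}$ is deterministic; since the chosen observation satisfies $x_5 \le 0$, the second piece vanishes, so $f_S(\boldsymbol{x}_S) = E[g_1(\boldsymbol{X}) \mid \boldsymbol{X}_S = \boldsymbol{x}_S]$, which does not depend on $x_3$ or $x_4$. If $5 \notin S$, independence yields $E[\mathds{1}_{X_5 \le 0}\mid \boldsymbol{X}_S] = E[\mathds{1}_{X_5 > 0}\mid \boldsymbol{X}_S] = 1/2$, so $f_S(\boldsymbol{x}_S) = \tfrac12 E[g_1\mid\boldsymbol{X}_S=\boldsymbol{x}_S] + \tfrac12 E[g_2\mid\boldsymbol{X}_S=\boldsymbol{x}_S]$. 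In both cases independence makes the inner conditional expectations trivial: $E[a_k X_k \mid \boldsymbol{X}_S = \boldsymbol{x}_S] = a_k x_k$ if $k\in S$ and $0$ otherwise (here $E[X_k]=0$).

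Next I would substitute into the Shapley formula $\phi_{X_i}(f) = \tfrac{1}{p}\sum_{S\subseteq [p]\setminus\{i\}} \binom{p-1}{|S|}^{-1} \big[f_{S\cup\{i\}}(\boldsymbol{x}_{S\cup\{i\}}) - f_S(\boldsymbol{x}_S)\big]$ and split the outer sum according to whether $5\in S$ or $5\notin S$. For subsets with $5\in S$, both $f_S$ and $f_{S\cup\{i\}}$ equal conditional expectations of $g_1$, which is independent of $X_i$ for $i\in\{3,4\}$, so these terms contribute $0$. For subsets with $5\notin S$, the $g_1$ parts again cancel in $f_{S\cup\{i\}}-f_S$ because $g_1$ does not involve $X_3$ or $X_4$, and what remains is $\tfrac12\big(E[g_2\mid \boldsymbol{X}_{S\cup\{i\}}=\boldsymbol{x}_{S\cup\{i\}}] - E[g_2\mid \boldsymbol{X}_S=\boldsymbol{x}_S]\big) = \tfrac12 a_i x_i$, by the previous formula for conditional expectations under independence (adding $i$ to $S$ turns the $a_i E[X_i]=0$ term into $a_i x_i$, and the other coefficient $a_j$, $j\in\{3,4\}\setminus\{i\}$, contributes the same value on both sides).

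Assembling the surviving terms gives $\phi_{X_i} = \tfrac{1}{2p}\sum_{S\subseteq [p]\setminus\{i,5\}} \binom{p-1}{|S|}^{-1}\, a_i x_i$, which is exactly the claimed identity after identifying $1/2 = P(X_5>0)$ and $x_i = x_i - E[X_i]$, with the constant $K = \tfrac{1}{p}\,P(X_5>0)\sum_{S\subseteq[p]\setminus\{i,5\}}\binom{p-1}{|S|}^{-1}$. The main obstacle is not any single calculation but the disciplined bookkeeping across the four cases ($5\in S$ vs $5\notin S$, combined with $i\in S$ vs $i\notin S$) when expanding $f_{S\cup\{i\}}-f_S$: independence trivializes each conditional expectation, so the heart of the argument is really verifying that the $g_1$ contributions cancel term-by-term and that averaging over $X_5$ introduces the factor $P(X_5>0)=1/2$ in the $5\notin S$ branch.
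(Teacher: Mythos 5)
Your proposal is correct and follows essentially the same route as the paper's proof: split the Shapley sum according to whether $5\in S$, observe that the $5\in S$ terms vanish because conditioning on $x_5\le 0$ suppresses the $(a_3X_3+a_4X_4)$ branch and the remaining piece is independent of $X_i$, and use feature independence to reduce each $5\notin S$ increment to $P(X_5>0)\,a_i(x_i-E[X_i])$, yielding the same constant $K$.
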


\begin{proof}
\begin{align}
    \phi_3 & = \small \frac{1}{p}\sum_{S \subseteq [p]\setminus\{3\}} \binom{p-1}{|S|}^{-1} \bigg( f_{S\cup 3}(\boldsymbol{x}_{S\cup 3}) -f_{S}(\boldsymbol{x}_{S}) \bigg) \\
    & = \small \frac{1}{p}\sum_{S \subseteq [p]\setminus\{3, 5\}} \binom{p-1}{|S|}^{-1} \bigg( f_{S\cup 3}(\boldsymbol{x}_{S\cup 3}) -f_{S}(\boldsymbol{x}_{S}) \bigg) + 
    \frac{1}{p}\sum_{S \subseteq [p]\setminus\{3, 5\}} \binom{p-1}{|S|+1}^{-1} \bigg( f_{S\cup \{3, 5\}}(\boldsymbol{x}_{S\cup \{3, 5\}}) -f_{S \cup 5}(\boldsymbol{x}_{S \cup 5}) \bigg)
\end{align} 
The second term is zero. Indeed, $\forall S \subseteq [p]\setminus\{3, 5\}$
\begin{align*}
    f_{S\cup \{3, 5\}}(\boldsymbol{x}_{S\cup \{3, 5\}}) -f_{S \cup 5}(\boldsymbol{x}_{S \cup 5}) = 0
\end{align*}
Because, if we condition on the event $\{X_5 = \boldsymbol{x}_5\}$ with $x_5 \leq 0$ 
\begin{align*}
    f_{S\cup \{3, 5\}}(\boldsymbol{x}_{S\cup \{3, 5\}}) & = E\left[(a_1 X_1 + a_2 X_2) \mathds{1}_{X_{5} \leq 0} + (a_3 X_3 + a_4 X_4) \mathds{1}_{X_{5} > 0} \;|\; X_{S\cup \{3, 5\}}  = \boldsymbol{x}_{S\cup \{3, 5\}}\right] \\
    & = E\left[(a_1 X_1 + a_2 X_2) \mathds{1}_{X_{5} \leq 0} \;|\; X_{S\cup \{3, 5\}} = \boldsymbol{x}_{S\cup \{3, 5\}}\right] && \text{because $x_5 \leq 0$}\\ 
    & = E\left[(a_1 X_1 + a_2 X_2) \; |\; X_{S\cup 5} = \boldsymbol{x}_{S\cup 5}\right] && \perp \!\!\! \perp\text{of $X_3$} \\
    & = f_{S \cup 5}(\boldsymbol{x}_{S \cup 5}) 
\end{align*}
\end{proof}

The first term of 3.3 is the classic marginal contribution of SV in linear model. $\forall S \subseteq [p]\setminus\{3, 5\}$

\begin{align*} 
f_{S\cup 3}(\boldsymbol{x}_{S\cup 3}) 
& = E\left[a_1 X_1 + a_2 X_2 \;| X_{S\cup 3}=\boldsymbol{x}_{S\cup 3}\right] P(X_5 \leq 0 \;| X_{S\cup 3}=\boldsymbol{x}_{S\cup3}) \\
& + E\left[a_3 X_3 + a_4 X_4 \;| X_{S\cup 3}=\boldsymbol{x}_{S\cup 3}\right] P(X_5 > 0 | X_{S\cup 3} = \boldsymbol{x}_{S\cup3}) \\
& = E\left[a_1 X_1 + a_2 X_2 | X_{S}=\boldsymbol{x}_{S} \right]P(X_5 \leq 0) +  (E\left[a_2 X_2 | X_{S}=\boldsymbol{x}_{S} \right]+ a_3 \boldsymbol{x}_3)P(X_5 > 0) \\
& =  f_{S}(\boldsymbol{x}_{S}) + P(X_5 > 0)\Big(a_3(\boldsymbol{x}_3 -  E[X_3])\Big)
\end{align*}

Therefore, 
\begin{align*}
  \phi_3 & = \frac{1}{p}\sum_{S \subseteq [p]\setminus\{3, 5\}} \binom{p-1}{|S|}^{-1} P(X_5 > 0)\Big(a_3(\boldsymbol{x}_3 -  E[X_3])\Big)\\
        & = K\Big(a_3(\boldsymbol{x}_3 -  E[X_3])\Big) && K\; \text{is a constant}
\end{align*}

The computation of $\phi_4$ is obtained by symmetry.

\section{Relation between the Algorithm 1 (TreeSHAP with path-dependent) and $\hat{f}^{SHAP}$}
In section 3.1, we have said that the recursive algorithm 1 introduced in \cite{lundberg2020local2global} and shows in figure 2 assumes that the probabilities can be factored with the decision tree as: 
\begin{align} 
 P_X^{SHAP} \left( \prod_{k=1}^{d_m} I_{N_k} | \XS=\xs\right) =  \delta_S(N_1) \times \prod_{i=2\vert N_i \notin S}^{d_m }  P\left(X_{N_{i}} \in I_{N_{i}} \Big| \prod_{k=2\vert N_k \notin S}^{i} X_{N_{k-1}} \in I_{N_{k-1}}\right) \label{eq:ProbaSHAP_sup}
\end{align}
with $\delta_S(N_1)=P(X_{N_{1}} \in I_{N_{1}})$ if $N_1\notin S$, and 1 otherwise. 
\begin{figure}[ht] \label{fig:tree_shap}
    \begin{minipage}[b]{0.3\linewidth}
        \centering
        \includegraphics[width=10cm]{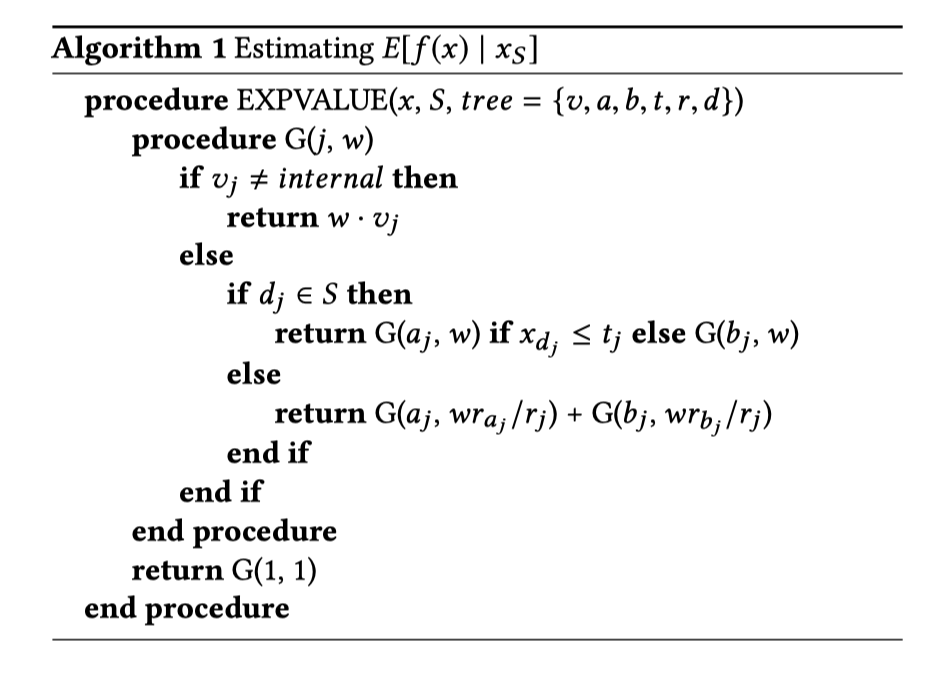}
    \end{minipage}
    \hspace{0.5cm}
    \begin{minipage}[b]{0.3\linewidth}
        \centering
        \includegraphics[width=11.3cm]{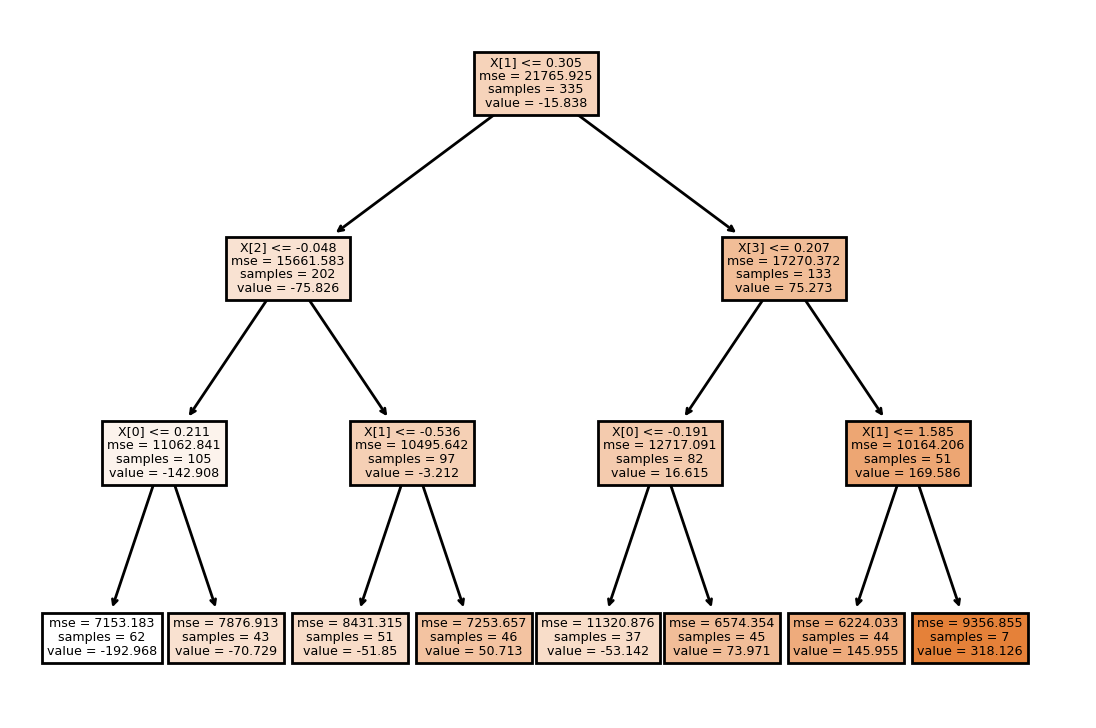}
    \end{minipage}
    \caption{Left figure: Algorithm 1 in \cite{lundberg2020local2global} (Tree SHAP). Right figure: A sample decision tree used to illustrate the link between $\hat{f}^{(SHAP)}$ }
\end{figure}

To show the link between between $\hat{f}^{SHAP}$ and the Algorithm 1, we choose an observation $x = (2, 3, 0.5, -1)$ and compute $E[\hat{f}^{SHAP}(X) | \; X_0=2, X_2=0.5]$ where $f$ is the tree in the left of figure  \ref{fig:tree_shap}. $x$ is comptatible with Leaf $6, 7, 11, 13, 14$, we denote $f_6, f_7, f_{11}, f_{13}, f_{14}$ the value of each leaf respectively. 

The output of the algorithm is on step 4, and its corresponds to: 

\begin{align*}
    \hat{f}^{(SHAP)}(x) & = P(X_1 \leq 0.305) P(X_2 > -0.048 | X_1 \leq 0.305) * P(X_1 \leq -0.536 | X_2 > -0.048) f_6\\
    & + P(X_1 \leq 0.305) P(X_2 > -0.048 | X_1 \leq 0.305) * P(X_1 > -0.536 | X_2 > -0.048) f_7\\
    & + P(X_1 > 0.305) P(X_3 \leq 0.207 | X_1 > 0.305) * P(X_0 > -0.191 | X_3 \leq 0.207) f_{11} \\
    & + P(X_1 > 0.305) P(X_3 > 0.207 | X_1 > 0.305) * P(X_1 \leq 1.585 | X_3 > 0.207) f_{13} \\
    & + P(X_1 > 0.305) P(X_3 > 0.207 | X_1 > 0.305) * P(X_1 > 1.585 | X_3 > 0.207) f_{14} \\
    & = (202/335) * 1 * (51/97) * (-51.85) + (202/335) * 1 * (46/97) * (50.716) \\
    & +  (133/335) * (82/133) * 1 * (73.971) + (133/335) * (51/133) * (44/51) * (145.955) \\
    & + (133/335) * (51/133) * (7/51) * (318.125) \\
    & = 41.98
\end{align*}

\begin{table}[ht]
\begin{tabular}{@{}|l|l|@{}}
\toprule
Step & Calculus                                     \\ \midrule
0    & G(0, 1)                                      \\ \cmidrule(r){1-1}
1    & G(1, 202/335) + G(8, 133/335)                \\ \cmidrule(r){1-1}
2    & G(5, 202/335) + G(9, 88/335) + G(12, 51/335) \\ \cmidrule(r){1-1}
3 & G(6,(202/335)*(51/97)) + G(7,(202/335)*(46/97)) + G(11,82/335) + G(13,44/335) \\
& + G(14,7/335)                \\ \cmidrule(r){1-1}
4 & -(202/335)*(51/97)*51,85 + (202/335)*(46/97)*50,713 + (82/335)*73,971 \\
& + (44/335)*145,955 + (7/335)*318,126 \\ \cmidrule(r){1-1}
5 & = 41.98 \\ \bottomrule
\end{tabular}
\end{table}

\newpage
\section{Additional experiments}

\subsection{Impact of quantile discretization}
The table below shows the impact of discretization on the performance of a Random Forest on UCI datasets.
\begin{table}[ht]
\centering
\begin{tabular}{@{}|l|lllll|@{}}
\toprule
\textbf{Dataset} &
  \multicolumn{1}{l|}{\textbf{Breiman's RF}} &
  \multicolumn{1}{l|}{\textbf{q=2}} &
  \multicolumn{1}{l|}{\textbf{q=5}} &
  \multicolumn{1}{l|}{\textbf{q=10}} &
  \textbf{q=20} \\ \midrule
Authentification & 0.0002 & 0.08 & 0.002 & 0.0005 & 0.0004 \\ \cmidrule(r){1-1}
Diabetes         & 0.17   & 0.23 & 0.18  & 0.18   & 0.18   \\ \cmidrule(r){1-1}
Haberman         & 0.32   & 0.35 & 0.30  & 0.32   & 0.30   \\ \cmidrule(r){1-1}
Heart Statlog    & 0.10   & 0.10 & 0.10  & 0.10   & 0.10   \\ \cmidrule(r){1-1}
Hepastitis       & 0.13   & 0.15 & 0.14  & 0.14   & 0.13   \\ \cmidrule(r){1-1}
Ionosphere       & 0.02   & 0.07 & 0.03  & 0.02   & 0.02   \\ \cmidrule(r){1-1}
Liver Disorders  & 0.23   & 0.32 & 0.27  & 0.25   & 0.24   \\ \cmidrule(r){1-1}
Sonar            & 0.07   & 0.09 & 0.07  & 0.07   & 0.07   \\ \cmidrule(r){1-1}
Spambase         & 0.01   & 0.14 & 0.03  & 0.02   & 0.01   \\ \cmidrule(r){1-1}
Titanic          & 0.13   & 0.15 & 0.14  & 0.14   & 0.13   \\ \cmidrule(r){1-1}
Wilt             & 0.007  & 0.15 & 0.03  & 0.02   & 0.02   \\ \bottomrule
\end{tabular}
\caption{Accuracy, measured by 1-AUC on UCI datasets, for two algorithms: Breiman’s random forests and random forests with splits limited to q-quantiles, for $q \in \{2,5,10,20\}$. Table 5 in \cite{benard2021sirus}}
\end{table}

\subsection{The differences between Coalition and sum on Census Data}
We use UCI Adult Census Dataset \cite{UCIADULT:2019}. We keep only 4 highly-predictive categorical variables: Marital Status, Workclass, Race, Education and use a Random Forest which has a test accuracy of $86\%$. We compare the Global SV by taking the coalition or sum of the modalities. Global SV are defined as $I_j = \sum_{i=0}^{N} \vert \phi_{X_j}^{(i)} \vert$.
\begin{figure}[ht]
    \begin{minipage}[b]{0.45\linewidth}
        \centering
        \includegraphics[width=6cm]{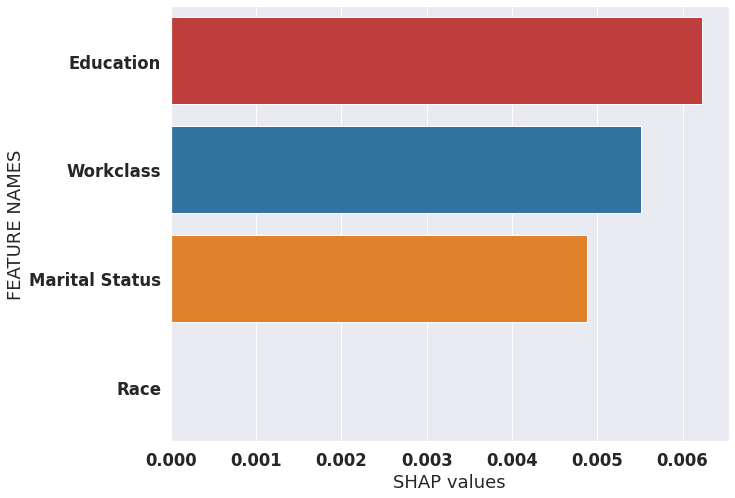}
    \end{minipage}
    \hspace{0.5cm}
    \begin{minipage}[b]{0.45\linewidth}
        \centering
        \includegraphics[width=6cm]{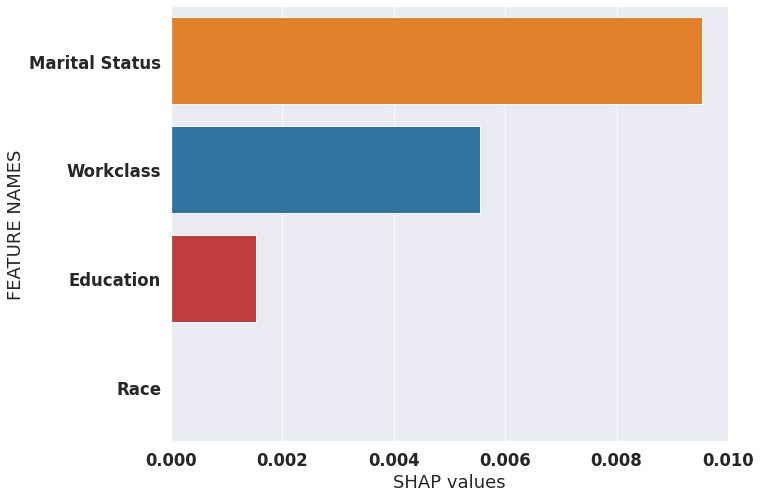}
    \end{minipage}
    \caption{Difference between the global absolute value of SV: sum (left) vs coalition (right) of dummies of individual with modalities: Married, local gov, others, 1rst-4th.}
    \label{fig:mean_shap_value_examples}
\end{figure}

In figure \ref{fig:mean_shap_value_examples}, we see differences between the global SV with coalition and sum with N=5000. The ranking of the variables changes, e.g. Education goes from important with sum to not important with the coalition. We also compute the proportion of order inversion over N=5000 observations choose randomly. The ranking of variables is changed in $10\%$ of the cases. Note that this difference may increase or diminish depending on the data.

\section{EXPERIMENTAL SETTINGS} \label{sec:exp}
\setcounter{section}{1}
\setcounter{subsection}{0}
All our experiments are reproducible and can be found on the github repository \emph{Active Coalition of Variables}, \url{https://github.com/salimamoukou/acv00}

\subsection{Toy model of Section 2.3} \label{appendix:toy_model}
Recall that the model is a linear predictor with categorical variables define as $f(X, Y) = B_Y X$ with $X|Y=y \sim \mathcal{N}(\mu_y, \Sigma_y)$ and $\mathbb{P}(Y=y)= \pi_y$, $Y \in \{a, b, c\}$.\\
For the experiments in Figure 1 and 2, we set $\pi_y = \frac{1}{3}, \mu_y = 0,  \forall \; y \in \{a, b, c\}$. We use a random matrices generated from a Wishart distribution. The covariance matrices  are:\\
$\Sigma_a=$ $\begin{bmatrix}
0.41871254 & -0.790061361 & 0.46956991\\
-0.79006136 &  1.90865098 & -0.82571655\\
0.46956991 &  -0.82571655 &   0.95835472
\end{bmatrix}$
,  $\Sigma_b=$$\begin{bmatrix}
0.55326081 & 0.11811951& -0.70677924]\\
0.11811951&  2.73312979& -2.94400196\\
-0.70677924& -2.94400196&  4.22105088
\end{bmatrix}$, $\Sigma_c=$
$\begin{bmatrix}
9.2859966&  1.12872646&  2.4224434\\
1.12872646&  0.92891237& -0.14373393\\
2.4224434& -0.14373393&  1.81601676
\end{bmatrix}$  for $y \in \{a, b, c\}$ respectively. \newline
The coefficients are $B_a =[1, 3, 5], B_b = [-5, -10, -8], B_c = [6, 1, 0]$ and the selected observation  in figure 1 values is $x = [ 0.35, -1.61, -0.11, 1.,  0. , 0.]$

\subsection{Toy model of Section 4}
The data $\mathcal{D}=\left(x_{i},z_{i}\right)_{1\leq i\leq n}$ are generated from a linear regression $Z = B^t X$ with  $n = 10 000$, $\boldsymbol{X} \in \mathbb{R}^{p}$, $\boldsymbol{X} \in \mathcal{N}(0, \Sigma)$, $\Sigma = \rho  J_p + (\rho - 1) I_p$ with $p=5, \rho=0.7$, $I_p$ is the identity matrix, $J_p$ is all-ones matrix and a linear predictor  $Z = B^t \boldsymbol{X}$. $B = [6.49, -2.44, -2.11, -4.29, 3.46]$ for the continuous case and d=3, $B = [6.49, -2.44, 0]$ for the discrete case.

We used the decision tree of sklearn trained on $\mathcal{D}$ with the defaults parameters. The Mean Squared Error (MSE) are MSE $= 4.39$ for the continuous case and MSE $= 2.88$ for the discrete case.

\end{document}